\newtheorem{mydef}{Definition}
\newtheorem{myremark}{Remark}
\newtheorem{mytheo}{Theorem}
\newtheorem{myprop}{Proposition}
\newtheorem{myproperty}{Property}
\newtheorem{myassump}{Assumption}
\newsavebox\mybox
\begin{document}
%

\title{High-dimensional Neural Feature Design for Layer-wise Reduction of Training Cost}

%
%
%

%
%

\author{\IEEEauthorblockN{Alireza M. Javid, Arun Venkitaraman, Mikael Skoglund, and Saikat Chatterjee}\\
School of Electrical Engineering and Computer Science \\
KTH Royal Institute of Technology, Sweden}

%
%

\markboth{Submitted to EURASIP Journal on Advances in Signal Processing, 2020}%
{Shell \MakeLowercase{\textit{et al.}}: Bare Demo of IEEEtran.cls for IEEE Journals}
%



\maketitle

\begin{abstract}
We design a ReLU-based multilayer neural network by mapping the feature vectors to a higher dimensional space in every layer. We design the weight matrices in every layer to ensure a reduction of the training cost as the number of layers increases. Linear projection to the target in the higher dimensional space leads to a lower training cost if a convex cost is minimized. An $\ell_2$-norm convex constraint is used in the minimization to reduce the generalization error and avoid overfitting. The regularization hyperparameters of the network are derived analytically to guarantee a monotonic decrement of the training cost, and therefore, it eliminates the need for cross-validation to find the regularization hyperparameter in each layer. We show that the proposed architecture is norm-preserving and provides an invertible feature vector, and therefore, can be used to reduce the training cost of any other learning method which employs linear projection to estimate the target.
\end{abstract}

\begin{IEEEkeywords}
Rectified linear unit, feature design, neural network, convex cost function
\end{IEEEkeywords}

Nonlinear mapping of low-dimensional signal to high-dimensional space is a traditional method for constructing useful feature vectors, specifically for classification problems.
The intuition is that, by extending to a high dimension, the feature vectors of different classes become easily separable by a linear classifier. The drawback of performing classification in a higher-dimensional space is the increased computational complexity. This issue can be handled by a well-known method called 'kernel trick' in which the complexity depends only on the inner products in the high-dimensional space. Support vector machine (SVM) \cite{SVM_1995} and kernel PCA (KPCA) \cite{KPCA_1998} are examples of creating high-dimensional features by employing the kernel trick. The choice of the kernel function is a critical aspect that can affect the classification performance in the higher dimensional space. A popular kernel is the radial basis function (RBF) kernel or Gaussian kernel, and its good performance is justified by its ability to map the feature vector to a very high, infinite, dimensional space \cite{Bishop}. In this manuscript, we design a high-dimensional feature using an artificial neural network (ANN) architecture to achieve a better classification performance by increasing the number of layers. The architecture uses the rectified linear unit (ReLU) activation, predetermined orthonormal matrices, and a fixed structured matrix. We refer to this as High-dimensional neural feature (HNF) throughout the manuscript.

Neural networks and deep learning architectures have received overwhelming attention over the last decade \cite{DL_SPMag}. Appropriately trained neural networks have been shown to outperform the traditional methods in different applications, for example in classification and regression tasks\cite{Russakovsky2015,DodgeK17b}. By the continually increasing computational power, the field of machine learning is being enriched with active research pushing classification performance to higher levels for several challenging datasets \cite{pmlr-v28-wan13,GoodInit2016,pmlr-v51-lee16a}. However, very little is known regarding how many numbers of neurons and layers are required in a network to achieve better performance. Usually, some rule-of-thumb methods are used for determining the number of neurons and layers in an ANN, or an exhaustive search is employed which is extremely time-consuming \cite{NodeNum_2015}. In particular, the technical issue - guaranteeing performance improvement with increasing the number of layers - is not straight-forward in traditional neural network architectures, e.g., deep neural network (DNN) \cite{DNN_2013}, convolutional neural network (CNN) \cite{CNN_2012}, recurrent neural network (RNN) \cite{RNN_2013}, etc. We endeavor to address this technical issue by mapping the feature vectors to a higher dimensional space using predefined weight matrices.  

There exist several works employing predefined weight matrices that do not need to be learned. Scattering convolution network \cite{ScatteringNet_2013} is a famous example of these approaches which employs wavelets-based scattering transform to design the weight matrices. Random matrices have also been widely used as a mean for reducing the computational complexity of neural networks while achieving comparable performance as with fully-learned networks \cite{mathematicsDN,giryes_randomweights,SSFN_Saikat,PLN_Saikat}. In the case of the simple, yet effective, extreme learning machine (ELM), the first layer of the network is assigned randomly chosen weights and the learning takes place only at the end layer \cite{elm_Huang2012,elm_HUANG2015,elm_8085130,HELM}. It has also been shown recently that a similar performance to fully-learned networks may be achieved by training a network with most of the weights assigned randomly and only a small fraction of them being updated throughout the layers \cite{randomDN}. It has been shown that networks with Gaussian random weights provide a distance-preserving embedding of the input data \cite{giryes_randomweights}. The recent work \cite{PLN_Saikat} designs a deep neural network architecture called progressive learning network (PLN) which guarantees the reduction of the training cost with increasing the number of layers. In PLN, every layer is comprised of a predefined random part and a projection part which is trained individually using a convex cost function. These approaches indicate that randomness has much potential in terms of high-performance at low computational complexity. We design a multilayer neural network using predefined orthonormal matrices, e.g., random orthonormal matrix, DCT matrix, etc, to ensure reducing the training cost as the number of layers increases.

\subsection{Our contributions}
Motivated by the prior use of fixed matrices, we design the HNF architecture using an appropriate combination of ReLU, random matrices, and fixed matrices. We use predefined weight matrices in every layer of the network, and therefore, the architecture does not suffer from the infamous vanishing gradient problem. We theoretically show that the output of each layer provides a richer representation compared to the previous layers if a convex cost is minimized to estimate the target. We use an $\ell_2$-norm convex constraint to reduce the generalization error and avoid overfitting to the training data. We analytically derive the regularization hyperparameter to ensure the decrement of the training cost in each layer. Therefore, there is no need for cross-validation to find the optimum regularization hyperparameters of the network. We show that the proposed HNF is norm-preserving and invertible, and therefore, can be used to improve the performance of other learning methods that use linear projection to estimate the target. Finally, we show the classification performance of the proposed HNF against ELM and state-of-the-art results. Note that a preliminary version of this manuscript has been submitted to ICASSP 2020 recently.


\subsection{Notations}
We use the following notations unless otherwise noted: We use bold capital letters, e.g., $\mathbf{W}$, to denote matrices and bold lowercase letters, e.g., $\mathbf{x}$, to denote vectors. We use calligraphic letter $\mathcal{M}$ to denote a set and $\mathcal{M}^c$ to denote compliment set. The cardinality of a set $\mathcal{M}$ is denoted by $|\mathcal{M}|$. For a scalar $x \in \mathbb{R}$, let us denote its sign and magnitude as $s(x) \in \{-1,+1\}$ and $|x|$, respectively, and write $x = s(x) |x|$. For a vector $\mathbf{x}$, we define the sign vector $\mathbf{s}(\mathbf{x})$ and magnitude vector $|\mathbf{x}|$ by the element-wise operation. We define $\mathbf{g}(\cdot)$ as a non-linear function comprised of a stack of element-wise ReLU activation functions. A vector $\mathbf{x}$ has non-negative part $\mathbf{x}^{+}$ and non-positive part $\mathbf{x}^{-}$ such that $\mathbf{x} = \mathbf{x}^{+} + \mathbf{x}^{-}$ and $\mathbf{g}(\mathbf{x}) = \mathbf{x}^{+}$. We use $\| \cdot \|$ and $\| \cdot \|_F$ to denote $\ell_2$-norm and Frobenius norm, respectively. For example, it can be seen that $\|\mathbf{x}\|^2 = \|\mathbf{x}^{+} \|^2 + \|\mathbf{x}^{-} \|^2$.

\section{Proposed method}
\label{sec:Discrimination}
In this section, we illustrate the motivation to design a high-dimensional feature vector by using ReLU activation function. We analyze the behavior of a single layer ReLU network to the input perturbation noise and show that by mapping the feature vectors to a higher dimension, we can increase the discrimination power of the ReLU network.

For an ANN, we wish to have noise robustness and discriminative power. We characterize this in the following definition.

\begin{mydef}[Noise Robustness and Point Discrimination]
	\label{def:Robust_definition}
	Let $\mathbf{x}_1$ and $\mathbf{x}_2$ be two input vectors such that $\mathbf{x}_1 \neq \mathbf{x}_2$, and we have outputs of ANN $\tilde{\mathbf{t}}_1 = \mathbf{f}(\mathbf{x}_1)$ and $\tilde{\mathbf{t}}_2 = \mathbf{f}(\mathbf{x}_2)$. We can characterize a perturbation scenario with the perturbation noise $\Delta$ as $\mathbf{x}_2 = \mathbf{x}_1 + \Delta$. We wish that the proposed ANN holds the property
	\begin{eqnarray}
		c_1  \|  \mathbf{x}_1 - \mathbf{x}_2 \|^2 \leq \|  \mathbf{f}(\mathbf{x}_1) - \mathbf{f}(\mathbf{x}_2) \|^2  \leq c_2  \|  \mathbf{x}_1 - \mathbf{x}_2 \|^2,
	\end{eqnarray}
	where $0 < c_1 \leq 1$ and $c_1 \le c_2$.
\end{mydef}

\noindent Note that the lower bound provides point discrimination power and the upper bound provides noise robustness to the input.
\subsection{Layer Construction}
We first concentrate on one block of ANN -- this is called a layer in the neural network literature. The layer has an input vector vector $\mathbf{q} \in \mathbb{R}^{m\times 1}$ and the output vector $\mathbf{y} = \mathbf{g}(\mathbf{W}\mathbf{q})$. The dimension of $\mathbf{y}$ is the number of neurons in the layer. If we can guarantee that the layer of ANN provides noise robustness and point discrimination property then, the full ANN comprising of multiple layers connected sequentially can be guaranteed to hold robustness and discriminative properties.   
We need to construct $\mathbf{W}$ in such a manner that the layer has noise robustness and discriminative power according to the Definition \ref{def:Robust_definition}.

\subsection{ReLU Activation and A Limitation}
We first show three essential properties of ReLU function, required to develop our main results. We then discuss one possible limitation of the ReLU function and propose a remedy to circumvent the problem.
\begin{myproperty}[Scaling]
	\label{prop:Scaling_property}
	ReLU function has a scaling property. If $\mathbf{y} = \mathbf{g}(\mathbf{W}\mathbf{q})$, then $a\mathbf{y} = \mathbf{g}(\mathbf{W}(a\mathbf{q}))$ for a scalar $a \ge 0$.
\end{myproperty}
\begin{myproperty}[Sparsity]
	ReLU function provides sparse output vector $\mathbf{y}$ such that $\| \mathbf{y} \|_0 \leq \mathrm{dim}(\mathbf{y})$.
\end{myproperty}
\begin{myproperty}[Noise Robustness]
	\label{property:property2}
	Let us consider $\mathbf{z} = \mathbf{W} \mathbf{q}$. For two vectors $\mathbf{q}_1$ and $\mathbf{q}_2$, we define corresponding vectors $\mathbf{z}_1 = \mathbf{W} \mathbf{q}_1$ and $\mathbf{z}_2 = \mathbf{W} \mathbf{q}_2$, and output vectors $\mathbf{y}_1 = \mathbf{g}(\mathbf{z}_1) = \mathbf{g}(\mathbf{W}\mathbf{q}_1)$ and $\mathbf{y}_2 = \mathbf{g}(\mathbf{z}_2) = \mathbf{g}(\mathbf{W}\mathbf{q}_2)$. 
	Now, we have the following relation 
	\begin{eqnarray}
		0 \leq \| \mathbf{y}_1 - \mathbf{y}_2 \|^2 = \| \mathbf{g}(\mathbf{z}_1) - \mathbf{g}(\mathbf{z}_2) \|^2 \leq \| \mathbf{z}_1 - \mathbf{z}_2 \|^2.
	\end{eqnarray}
\end{myproperty}
\noindent The proof of Property~\ref{property:property2} is shown in Appendix \ref{Append:Property_ReLU}. The upper bound relation holds Lipschitz continuity that provides noise robustness. On the other hand, the lower bound being zero cannot maintain a minimum distance between two points $\mathbf{y}_1$ and $\mathbf{y}_2$. 
An example of extreme effect is that when $\mathbf{z}_1$ and $\mathbf{z}_2$ are non-positive vectors, we get $\| \mathbf{y}_1 - \mathbf{y}_2 \|^2 = 0$. 
This may limit the capacity of the ReLU function for achieving a good discriminative power. A reason for the limitation `lower bound being zero' is due to the structure of the input matrix $\mathbf{W}$. We build an appropriate structure for the input matrix to circumvent the limitation.

We now engineer a remedy for this limitation. Let us consider 
$
\bar{\mathbf{y}}=\mathbf{g}(\mathbf{V}\mathbf{z})= \mathbf{g}(\mathbf{V} \mathbf{W} \mathbf{q}),
$
where $\mathbf{z} = \mathbf{W} \mathbf{q} \in \mathbb{R}^{n}$ and $\mathbf{V}$ is a linear transform matrix.
For two vectors $\mathbf{q}_1$ and $\mathbf{q}_2$, we have corresponding vectors $\mathbf{z}_1 = \mathbf{W} \mathbf{q}_1$ and $\mathbf{z}_2 = \mathbf{W} \mathbf{q}_2$, and output vectors $\bar{\mathbf{y}}_1 = \mathbf{g}(\mathbf{V}\mathbf{z}_1)$ and $\bar{\mathbf{y}}_2 = \mathbf{g}(\mathbf{V}\mathbf{z}_2)$. Our interest is to show that there exists a predefined matrix $\mathbf{V}$  for which we have both noise robustness and discriminative power properties.

\begin{myprop}
	\label{proposition:proposition_with_Vn}
	Let us construct a $\mathbf{V}$ matrix as follows
	\begin{eqnarray}
		\mathbf{V} = \left[ 
		\begin{array}{c}
			\mathbf{I}_n \\
			- \mathbf{I}_n
		\end{array}
		\right] \triangleq \mathbf{V}_n.
		\label{eq:V_definition}
	\end{eqnarray}
	For the output vectors $\bar{\mathbf{y}}_1 = \mathbf{g}(\mathbf{V}_n\mathbf{z}_1) \in \mathbb{R}^{2n}$ and $\bar{\mathbf{y}}_2 = \mathbf{g}(\mathbf{V}_n\mathbf{z}_2) \in \mathbb{R}^{2n}$, we have $\| \bar{\mathbf{y}}_1 \|^2 = \| \mathbf{z}_1 \|^2 $ and $\| \bar{\mathbf{y}}_2 \|^2 = \| \mathbf{z}_2 \|^2 $ and 
	\begin{eqnarray}
		0 < \frac{1}{2} \| \mathbf{z}_1 - \mathbf{z}_2 \|^2 \leq \| \bar{\mathbf{y}}_1 - \bar{\mathbf{y}}_2 \|^2  \leq \| \mathbf{z}_1 - \mathbf{z}_2 \|^2.
		\label{eq:LBAndUB_1}
	\end{eqnarray}
\end{myprop}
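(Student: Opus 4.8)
The plan is to make $\bar{\mathbf{y}}$ completely explicit and then reduce the whole statement to an elementary fact about the non-negative and non-positive parts of a vector. First I would observe that, by the block form \eqref{eq:V_definition}, $\mathbf{V}_n\mathbf{z}$ is just $\mathbf{z}$ stacked on top of $-\mathbf{z}$; applying the element-wise ReLU stack $\mathbf{g}(\cdot)$ coordinate by coordinate then gives that $\bar{\mathbf{y}} = \mathbf{g}(\mathbf{V}_n\mathbf{z})$ equals $\mathbf{z}^{+}$ stacked on top of $-\mathbf{z}^{-}$, since $\mathbf{g}(\mathbf{z}) = \mathbf{z}^{+}$ and $\mathbf{g}(-\mathbf{z}) = -\mathbf{z}^{-}$. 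The norm-preservation claim is then immediate from the identity $\|\mathbf{z}\|^2 = \|\mathbf{z}^{+}\|^2 + \|\mathbf{z}^{-}\|^2$ recalled in the Notations section: $\|\bar{\mathbf{y}}\|^2 = \|\mathbf{z}^{+}\|^2 + \|{-}\mathbf{z}^{-}\|^2 = \|\mathbf{z}\|^2$, and likewise for $\bar{\mathbf{y}}_1$ and $\bar{\mathbf{y}}_2$.

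For the two-sided distance bound in \eqref{eq:LBAndUB_1}, I would introduce the shorthand $\mathbf{a} \triangleq \mathbf{z}_1^{+} - \mathbf{z}_2^{+}$ and $\mathbf{b} \triangleq \mathbf{z}_1^{-} - \mathbf{z}_2^{-}$. From the explicit stacked form of $\bar{\mathbf{y}}$, the difference $\bar{\mathbf{y}}_1 - \bar{\mathbf{y}}_2$ is $\mathbf{a}$ stacked on top of $-\mathbf{b}$, so $\|\bar{\mathbf{y}}_1 - \bar{\mathbf{y}}_2\|^2 = \|\mathbf{a}\|^2 + \|\mathbf{b}\|^2$. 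On the other hand, $\mathbf{z}_1 - \mathbf{z}_2 = \mathbf{a} + \mathbf{b}$, hence $\|\mathbf{z}_1 - \mathbf{z}_2\|^2 = \|\mathbf{a}\|^2 + \|\mathbf{b}\|^2 + 2\,\mathbf{a}^{\top}\mathbf{b}$. Thus both inequalities reduce to controlling the single cross term $\mathbf{a}^{\top}\mathbf{b}$.

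The crux, and the step I expect to be the only real obstacle, is to show $\mathbf{a}^{\top}\mathbf{b} \ge 0$. I would argue coordinate-wise: fixing an index $i$ and writing $u = z_{1,i}$, $v = z_{2,i}$, the $i$-th summand is $(u^{+} - v^{+})(u^{-} - v^{-})$, and one checks the four sign combinations of $(u,v)$. When $u,v \ge 0$ the second factor vanishes; when $u,v \le 0$ the first factor vanishes; and when $u$ and $v$ have strictly opposite signs the product equals $u\,|v|$ or $|u|\,v$, which is $\ge 0$. Summing over $i$ yields $\mathbf{a}^{\top}\mathbf{b} \ge 0$.

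Given $\mathbf{a}^{\top}\mathbf{b} \ge 0$, the upper bound in \eqref{eq:LBAndUB_1} is immediate, since $\|\mathbf{z}_1 - \mathbf{z}_2\|^2 = \|\mathbf{a}\|^2 + \|\mathbf{b}\|^2 + 2\mathbf{a}^{\top}\mathbf{b} \ge \|\mathbf{a}\|^2 + \|\mathbf{b}\|^2 = \|\bar{\mathbf{y}}_1 - \bar{\mathbf{y}}_2\|^2$. For the lower bound I would combine $\mathbf{a}^{\top}\mathbf{b} \ge 0$ with the elementary bound $2\mathbf{a}^{\top}\mathbf{b} \le \|\mathbf{a}\|^2 + \|\mathbf{b}\|^2$, giving $\|\mathbf{z}_1 - \mathbf{z}_2\|^2 \le 2(\|\mathbf{a}\|^2 + \|\mathbf{b}\|^2) = 2\|\bar{\mathbf{y}}_1 - \bar{\mathbf{y}}_2\|^2$, i.e. $\|\bar{\mathbf{y}}_1 - \bar{\mathbf{y}}_2\|^2 \ge \tfrac12\|\mathbf{z}_1 - \mathbf{z}_2\|^2$. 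Finally, the strict positivity of the leftmost quantity holds whenever $\mathbf{z}_1 \ne \mathbf{z}_2$, which is the case here since $\mathbf{q}_1 \ne \mathbf{q}_2$ and $\mathbf{W}$ is injective (e.g., an orthonormal matrix); this strictly positive lower bound is exactly the improvement over the possibly-zero lower bound of Property~\ref{property:property2}.
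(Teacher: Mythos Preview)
Your proof is correct. Both you and the paper begin with the explicit form $\bar{\mathbf{y}} = [\mathbf{z}^{+};\,-\mathbf{z}^{-}]$ and read off norm preservation, but the organizations then diverge. You package everything into $\mathbf{a} = \mathbf{z}_1^{+} - \mathbf{z}_2^{+}$ and $\mathbf{b} = \mathbf{z}_1^{-} - \mathbf{z}_2^{-}$, so that $\|\bar{\mathbf{y}}_1 - \bar{\mathbf{y}}_2\|^2 = \|\mathbf{a}\|^2 + \|\mathbf{b}\|^2$ and $\|\mathbf{z}_1 - \mathbf{z}_2\|^2 = \|\mathbf{a}+\mathbf{b}\|^2$; both bounds then reduce to the coordinate-wise fact $\mathbf{a}^{\top}\mathbf{b}\ge 0$ plus the trivial $2\mathbf{a}^{\top}\mathbf{b}\le \|\mathbf{a}\|^2+\|\mathbf{b}\|^2$. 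The paper instead introduces the sign-agreement index set $\mathcal{M}(\mathbf{z}_1,\mathbf{z}_2)$ and, through a longer calculation, derives the exact identities $\|\bar{\mathbf{y}}_1 - \bar{\mathbf{y}}_2\|^2 = \|\mathbf{z}_1 - \mathbf{z}_2\|^2 + 2\sum_{i\in\mathcal{M}^c} z_1(i)z_2(i)$ and $\|\bar{\mathbf{y}}_1 - \bar{\mathbf{y}}_2\|^2 = \tfrac12\|\mathbf{z}_1 - \mathbf{z}_2\|^2 + \tfrac12\|\,|\mathbf{z}_1| - |\mathbf{z}_2|\,\|^2$. In your variables the second identity is just the parallelogram law (since $|\mathbf{z}_1|-|\mathbf{z}_2| = \mathbf{a}-\mathbf{b}$), and the cross term in the first is exactly $-\mathbf{a}^{\top}\mathbf{b}$, so the two arguments are the same content differently arranged. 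Your route is more compact; the paper's buys explicit equality conditions (upper bound tight iff $\mathcal{M}^c=\emptyset$, lower bound tight iff $|\mathbf{z}_1|=|\mathbf{z}_2|$) and the intermediate formula \eqref{eq:y_bar_distance_a}, which it reuses in Remark~\ref{remark:noise_effect_in_z}. One minor point: for the lower bound you do not actually need $\mathbf{a}^{\top}\mathbf{b}\ge 0$; the inequality $2\mathbf{a}^{\top}\mathbf{b}\le \|\mathbf{a}\|^2+\|\mathbf{b}\|^2$ alone suffices.
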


\noindent The proof of the above proposition can be found in the Appendix \ref{Append:Prop_1}. Based on the above proposition, we can interpret the effect of noise passing through such layer. Let $\mathbf{z}_2 = \mathbf{z}_1 + \Delta \mathbf{z}$, where $\Delta \mathbf{z}$ is a small perturbation noise. Note that $\Delta \mathbf{z} = \mathbf{z}_1 - \mathbf{z}_2 = \mathbf{W}[\mathbf{q}_1 - \mathbf{q}_2] = \mathbf{W} \, \Delta \mathbf{q}$. To investigate effect of perturbation noise, we now state our main assumption.

\begin{myassump}
	\label{assump:sign_pattern_change_in_z}
	\textnormal{Given a small $\| \Delta \mathbf{z} \|^2$, the sign patterns of $\mathbf{z}_1$ and $\mathbf{z}_2$ does not differ significantly. On the other hand, for a large perturbation noise $\| \Delta \mathbf{z} \|^2$, the sign patterns of $\mathbf{z}_1$ and $\mathbf{z}_2$ vary significantly.}
\end{myassump}

\noindent The above assumption means that for a small $\| \Delta \mathbf{z} \|^2$, the set $\mathcal{M}(\mathbf{z}_1,\mathbf{z_2}) = \{ i | s(z_1(i)) = s(z_2(i)) \neq 0 \}$ is close to a full set and $\mathcal{M}^c(\mathbf{z}_1,\mathbf{z_2})$ is close to an empty set. On the other hand, for a large $\| \Delta \mathbf{z} \|^2$, the set $\mathcal{M}(\mathbf{z}_1,\mathbf{z_2}) = \{ i | s(z_1(i)) = s(z_2(i)) \neq 0 \}$ is close to an empty set and $\mathcal{M}^c(\mathbf{z}_1,\mathbf{z_2})$ is close to a full set. Considering Assumption \ref{assump:sign_pattern_change_in_z}, we can present the following remark regarding the effect of noise in the layer.

\begin{myremark}[Effect of perturbation noise]
	\label{remark:noise_effect_in_z}
	\textnormal{For a small perturbation noise $\| \Delta \mathbf{z} \|^2$, we have $\| \bar{\mathbf{y}}_1 - \bar{\mathbf{y}}_2 \|^2 \approx \| \mathbf{z}_1 - \mathbf{z}_2 \|^2$. On the other hand, a large perturbation noise is attenuated.} 
\end{myremark}

\noindent This follows from the proof of Proposition~\ref{proposition:proposition_with_Vn}, specifically equations \eqref{eq:z_distance} and \eqref{eq:y_bar_distance_a}. In fact, if $\mathcal{M}^c = \emptyset$ then $\| \bar{\mathbf{y}}_1 - \bar{\mathbf{y}}_2 \|^2 = \| \mathbf{z}_1 - \mathbf{z}_2 \|^2$. We interpret that a small perturbation noise passes through the single layer $\mathbf{g}(\mathbf{V}\mathbf{z})$ almost not attenuated. Let us construct an illustrative example. Assume that $\mathcal{M}^c(\mathbf{z}_1,\mathbf{z_2})$ is a full set and $\forall i \in \mathcal{M}^c, \,\, |z_1(i)| = |z_2(i)|$. In that case, $\| \bar{\mathbf{y}}_1 - \bar{\mathbf{y}}_2 \|^2 = 0.5 \| \mathbf{z}_1 - \mathbf{z}_2 \|^2$ and we can comment that the perturbation noise is attenuated.

\section{High-dimensional neural feature}
\label{sec:HNF}
In this section, we employ the proposed weight matrix in \eqref{eq:V_definition} to construct a multilayer ANN. We show that by designing the weight matrices in every layer, it is possible to construct a network that provides noise robustness and point discrimination according to Definition \ref{def:Robust_definition}.

Let us establish the relation between the input vector $\mathbf{q} \in \mathbb{R}^m$ and output vector $\bar{\mathbf{y}} \in \mathbb{R}^{2n}$. For two vectors $\mathbf{q}_1$ and $\mathbf{q}_2$, we have corresponding vectors $\mathbf{z}_1 = \mathbf{W} \mathbf{q}_1$ and $\mathbf{z}_2 = \mathbf{W} \mathbf{q}_2$, and output vectors $\bar{\mathbf{y}}_1 = \mathbf{g}(\mathbf{V}_n\mathbf{z}_1) = \mathbf{g}(\mathbf{V}_n \mathbf{W}\mathbf{q}_1) $ and $\bar{\mathbf{y}}_2 = \mathbf{g}(\mathbf{V}_n\mathbf{z}_2) = \mathbf{g}(\mathbf{V}_n \mathbf{W}\mathbf{q}_2)$. Our interest is to show that it is possible to construct a $\mathbf{W} \in \mathbb{R}^{n \times m}$ matrix for which we have both noise robustness and discriminative power properties. We can construct $\mathbf{W}  \in \mathbb{R}^{n \times m}$ as orthonormal matrix, such that $n \geq m$ and $\mathbf{W}^{\top}\mathbf{W}=\mathbf{I}_m$. In that case, we have $\| \mathbf{q}_1 - \mathbf{q}_2 \|^2 = \| \mathbf{z}_1 - \mathbf{z}_2 \|^2$ for any pair of $(\mathbf{q}_1,\mathbf{q}_2)$. By combining the this relation with the equation \eqref{eq:LBAndUB_1}, we conclude the following proposition.

\begin{myprop}
	\label{prop:Noise_robustness_discrimination_ability_for_SingleLayer}
	Consider the single layer network $\bar{\mathbf{y}} = \mathbf{g}(\mathbf{V}_n\mathbf{z}) = \mathbf{g}(\mathbf{V}_n \mathbf{W}\mathbf{q})$ where $\mathbf{W} \in \mathbb{R}^{n \times m}$ is
	an orthonormal matrix, such that $n \geq m$ and $\mathbf{W}^{\top}\mathbf{W}=\mathbf{I}_m$. Then, $\| \bar{\mathbf{y}} \|^2 = \|\mathbf{q} \|^2 $, and for every two vectors $\mathbf{q}_1$ and $\mathbf{q}_2$, the following inequality holds
	\begin{eqnarray}
		\frac{1}{2} \| \mathbf{q}_1 - \mathbf{q}_2 \|^2 \leq \| \bar{\mathbf{y}}_1 - \bar{\mathbf{y}}_2 \|^2 \leq  \| \mathbf{q}_1 - \mathbf{q}_2 \|^2.
	\end{eqnarray}
\end{myprop}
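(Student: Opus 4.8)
The plan is to obtain this proposition as an immediate consequence of Proposition~\ref{proposition:proposition_with_Vn} combined with the norm-preserving property of a matrix with orthonormal columns, so the work is essentially bookkeeping. First I would record the two elementary identities that $\mathbf{W}^{\top}\mathbf{W} = \mathbf{I}_m$ buys us: for any $\mathbf{q} \in \mathbb{R}^m$, $\|\mathbf{W}\mathbf{q}\|^2 = \mathbf{q}^{\top}\mathbf{W}^{\top}\mathbf{W}\mathbf{q} = \|\mathbf{q}\|^2$, and applying the same computation to $\mathbf{q}_1 - \mathbf{q}_2$ gives $\|\mathbf{z}_1 - \mathbf{z}_2\|^2 = \|\mathbf{W}(\mathbf{q}_1 - \mathbf{q}_2)\|^2 = \|\mathbf{q}_1 - \mathbf{q}_2\|^2$. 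In particular $\|\mathbf{z}\|^2 = \|\mathbf{q}\|^2$.

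Next, since the hypotheses $n \ge m$ and $\mathbf{W} \in \mathbb{R}^{n\times m}$ ensure $\mathbf{z} = \mathbf{W}\mathbf{q} \in \mathbb{R}^n$, Proposition~\ref{proposition:proposition_with_Vn} applies verbatim to the vectors $\mathbf{z}, \mathbf{z}_1, \mathbf{z}_2$ and their images $\bar{\mathbf{y}} = \mathbf{g}(\mathbf{V}_n\mathbf{z}) \in \mathbb{R}^{2n}$. It yields $\|\bar{\mathbf{y}}\|^2 = \|\mathbf{z}\|^2$ together with $\tfrac{1}{2}\|\mathbf{z}_1 - \mathbf{z}_2\|^2 \le \|\bar{\mathbf{y}}_1 - \bar{\mathbf{y}}_2\|^2 \le \|\mathbf{z}_1 - \mathbf{z}_2\|^2$. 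Substituting the two norm identities from the previous step converts these into $\|\bar{\mathbf{y}}\|^2 = \|\mathbf{q}\|^2$ and $\tfrac{1}{2}\|\mathbf{q}_1 - \mathbf{q}_2\|^2 \le \|\bar{\mathbf{y}}_1 - \bar{\mathbf{y}}_2\|^2 \le \|\mathbf{q}_1 - \mathbf{q}_2\|^2$, which is exactly the assertion.

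There is no genuine obstacle here; the entire content is carried by Proposition~\ref{proposition:proposition_with_Vn}, and the role of this proposition is simply to note that prepending an isometry $\mathbf{W}$ (random orthonormal, DCT, etc.) leaves the two-sided bound intact when measured in terms of the original input $\mathbf{q}$. The only points worth a sentence of care are that $\mathbf{z} = \mathbf{W}\mathbf{q}$ indeed lies in $\mathbb{R}^n$ so that $\mathbf{V}_n$ is conformable, and that injectivity of $\mathbf{W}$ (again from $\mathbf{W}^{\top}\mathbf{W} = \mathbf{I}_m$) keeps $\mathbf{z}_1 \ne \mathbf{z}_2$ whenever $\mathbf{q}_1 \ne \mathbf{q}_2$, so the strict positivity in \eqref{eq:LBAndUB_1} transfers to the strict lower bound $0 < \tfrac{1}{2}\|\mathbf{q}_1 - \mathbf{q}_2\|^2$. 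I would close by remarking that this same composition argument is what will let the bound chain across successive layers in the multilayer HNF.
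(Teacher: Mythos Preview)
Your proposal is correct and follows essentially the same approach as the paper: the paper also derives the result by noting that the orthonormality condition $\mathbf{W}^{\top}\mathbf{W}=\mathbf{I}_m$ gives $\|\mathbf{q}_1-\mathbf{q}_2\|^2=\|\mathbf{z}_1-\mathbf{z}_2\|^2$ and then invokes Proposition~\ref{proposition:proposition_with_Vn} (equation~\eqref{eq:LBAndUB_1}) to obtain both the norm equality and the two-sided bound. Your write-up is a bit more explicit about the intermediate identities and the injectivity point, but the argument is the same.
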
 

\noindent The above proposition shows that by designing the weight matrix in a single layer network, it is possible to provide point discrimination and noise robustness according to Definition \ref{def:Robust_definition}. Note that the weight matrix $\mathbf{W}$ can be any orthonormal matrix such as instances of random orthonormal matrix, DCT matrix, etc. By considering the relation $\Delta \mathbf{z} = \mathbf{W} \, \Delta \mathbf{q}$, we can present a similar argument as in Remark~\ref{remark:noise_effect_in_z}. We interpret that a small perturbation noise $\| \Delta \mathbf{q} \|^2$ passes through the single layer $\mathbf{g}(\mathbf{V} \mathbf{W} \mathbf{q})$ almost not attenuated. This is stated in the following remark.
\begin{myremark}[Effect of perturbation noise]
	\label{remark:noise_effect_in_q}
	\textnormal{For a small $\| \Delta \mathbf{q} \|^2$, we have $\| \bar{\mathbf{y}}_1 - \bar{\mathbf{y}}_2 \|^2 \approx \| \mathbf{q}_1 - \mathbf{q}_2 \|^2$. On the other hand, a large perturbation noise is attenuated.}
\end{myremark}

\noindent  By directly using Proposition \ref{proposition:proposition_with_Vn}, we can present a similar bound in regards to the perturbation of the weight matrix $\mathbf{W}$ in a single layer construction. We can show that the perturbation norm in the output due to the perturbation to the weight matrix has an upper bound that is a scaled version of the input norm. 
The scaling parameter $\| \Delta \mathbf{W} \|_F^2$ is small for a small perturbation. The following remark illustrates this point in detail.
\begin{myremark}[Sensitivity to the weight matrix]
	\label{remark:system_perturbation}
	\textnormal{Let the weight matrix $\mathbf{W}$ be perturbed by $\Delta \mathbf{W}$. The effective weight matrix is $\mathbf{W} + \Delta \mathbf{W}$. For an input $\mathbf{q}$ and the respective outputs $\bar{\mathbf{y}}= \mathbf{g}(\mathbf{V}_n \mathbf{W} \mathbf{q})$ and $\bar{\mathbf{y}}_{\Delta}= \mathbf{g}(\mathbf{V}_n [\mathbf{W} + \Delta \mathbf{W}] \mathbf{q})$, we have
		\begin{align}
			\| \bar{\mathbf{y}} - \bar{\mathbf{y}}_{\Delta} \|^2 & \leq \| \Delta \mathbf{W} \|_F^2  \| \mathbf{q} \|^2.
	\end{align}}
\end{myremark}

\noindent The proof can be found in Appendix \ref{Append:Remark_3}. 

\subsection{Multilayer Construction}
A feedforward ANN is comprised of similar operational layers in a chain. Let us consider two layers in feedforward connection, e.g., $l$-th and $(l+1)$-th layers of an ANN. For the $l$-th layer, we use a superscript $(l)$ to denote appropriate variables and parameters. Let the $l$-th layer has $m^{(l)}$ nodes. The input to the $l$-th layer is $\mathbf{q}^{(l)} = \bar{\mathbf{y}}^{(l-1)}$. The output of $l$-th layer $\bar{\mathbf{y}}^{(l)} = \mathbf{g}(\mathbf{V}_{n^{(l)}}\mathbf{z}^{(l)}) = \mathbf{g}(\mathbf{V}_{n^{(l)}} \mathbf{W}^{(l)}\mathbf{q}^{(l)}) $ is next used as the input to the $(l+1)$-th layer, that means $\bar{\mathbf{y}}^{(l)} = \mathbf{q}^{(l+1)}$. Thus, the output of $(l+1)$-th layer is
\begin{align}
	\bar{\mathbf{y}}^{(l+1)} & = \mathbf{g}(\mathbf{V}_{n^{(l+1)}}\mathbf{z}^{(l+1)}) \nonumber \\ & = \mathbf{g}(\mathbf{V}_{n^{(l+1)}} \mathbf{W}^{(l+1)}\mathbf{q}^{(l+1)}) \nonumber \\  & = \mathbf{g}(\mathbf{V}_{n^{(l+1)}} \mathbf{W}^{(l+1)} \mathbf{g}(\mathbf{V}_{n^{(l)}} \mathbf{W}^{(l)}\mathbf{q}^{(l)}))
	\label{eq:Two_Blocks_IO_Relation}
\end{align}
Now, for the two vectors $\mathbf{q}^{(l)}_1$ and $\mathbf{q}^{(l)}_2$, we have the following relations in $l$-layer based on Proposition \ref{prop:Noise_robustness_discrimination_ability_for_SingleLayer}
\begin{eqnarray}
	\frac{1}{2} \| \mathbf{q}_1^{(l)} - \mathbf{q}_2^{(l)} \|^2 \leq \| \bar{\mathbf{y}}_1^{(l)} - \bar{\mathbf{y}}_2^{(l)} \|^2 \leq \| \mathbf{q}_1^{(l)} - \mathbf{q}_2^{(l)} \|^2.
\end{eqnarray}
We present the above results as the following theorem to provide noise robustness and discrimination power properties of the proposed ANN and call it High-dimensional Neural Feature (HNF) afterwards.

\begin{mytheo}
	\label{theo:RobustnessOfANN_forOthogonalMatrix}
	The proposed HNF uses ReLU activation function and is constructed as follows.
	\begin{enumerate}
		\item[(a)] The HNF is comprised of $L$ layers where the $l$-th layer has the corresponding structure $\bar{\mathbf{y}}^{(l)} = \mathbf{g}(\mathbf{V}_{n^{(l)}} \mathbf{W}^{(l)} \bar{\mathbf{y}}^{(l-1)})$. The $L$ layers are in a chain. The input to the first layer is $\mathbf{q}^{(1)} = \mathbf{x}$. The output of HNF is
		\begin{align*}
			\bar{\mathbf{y}}^{(L)} = \mathbf{g}(\mathbf{V}_{n^{(L)}} \mathbf{W}^{(L)} \mathbf{g}(\hdots \mathbf{g}(\mathbf{V}_{n^{(1)}} \mathbf{W}^{(1)} \mathbf{x}))).
		\end{align*}
		\item[(b)] In the HNF, $\mathbf{W}^{(l)} \in \mathbb{R}^{n^{(l)} \times m^{(l)}}$ matrices are orthonormal matrices with appropriate sizes, that is $n^{(l)} \ge m^{(l)}$ and $m^{(l)}=2n^{(l-1)}$.
	\end{enumerate}
	Then, $\| \bar{\mathbf{y}}^{(L)} \|^2 = \|\mathbf{x} \|^2 $, and the construted HNF provides noise robustness and discriminative power properties that are characterized by the following relation
	\begin{eqnarray}
		\frac{1}{2^L}\| \mathbf{x}_1 - \mathbf{x}_2 \|^2 \leq \| \bar{\mathbf{y}}_1^{(L)} - \bar{\mathbf{y}}_2^{(L)} \|^2 \leq  \| \mathbf{x}_1 - \mathbf{x}_2 \|^2,
	\end{eqnarray}
	where $\mathbf{x}_1 \in \mathbb{R}^{m^{(1)}}$ and $\mathbf{x}_2 \in \mathbb{R}^{m^{(1)}}$ are two input vectors to the HNF and their corresponding outputs are $\bar{\mathbf{y}}_1^{(L)}$ and $\bar{\mathbf{y}}_2^{(L)}$, respectively.
\end{mytheo}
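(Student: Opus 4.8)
The plan is to prove the theorem by induction on the number of layers $L$, using Proposition \ref{prop:Noise_robustness_discrimination_ability_for_SingleLayer} as the single-layer building block. First I would establish the norm-preservation claim $\| \bar{\mathbf{y}}^{(L)} \|^2 = \| \mathbf{x} \|^2$: Proposition \ref{prop:Noise_robustness_discrimination_ability_for_SingleLayer} gives $\| \bar{\mathbf{y}}^{(l)} \|^2 = \| \mathbf{q}^{(l)} \|^2$ for each single layer whenever $\mathbf{W}^{(l)}$ is orthonormal with $n^{(l)} \ge m^{(l)}$, and since the chaining condition $m^{(l)} = 2 n^{(l-1)}$ makes the output dimension $2 n^{(l-1)}$ of layer $l-1$ match the input dimension $m^{(l)}$ of layer $l$ (so $\mathbf{q}^{(l)} = \bar{\mathbf{y}}^{(l-1)}$ is well-defined), a trivial telescoping gives $\| \bar{\mathbf{y}}^{(L)} \|^2 = \| \mathbf{q}^{(L)} \|^2 = \| \bar{\mathbf{y}}^{(L-1)} \|^2 = \cdots = \| \mathbf{q}^{(1)} \|^2 = \| \mathbf{x} \|^2$.

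Next I would handle the two-sided distance inequality. The base case $L=1$ is exactly Proposition \ref{prop:Noise_robustness_discrimination_ability_for_SingleLayer} with $\mathbf{q}^{(1)} = \mathbf{x}$. For the inductive step, assume
\begin{eqnarray*}
	\frac{1}{2^{L-1}} \| \mathbf{x}_1 - \mathbf{x}_2 \|^2 \leq \| \bar{\mathbf{y}}_1^{(L-1)} - \bar{\mathbf{y}}_2^{(L-1)} \|^2 \leq \| \mathbf{x}_1 - \mathbf{x}_2 \|^2.
\end{eqnarray*}
Since $\mathbf{q}_1^{(L)} = \bar{\mathbf{y}}_1^{(L-1)}$ and $\mathbf{q}_2^{(L)} = \bar{\mathbf{y}}_2^{(L-1)}$, applying Proposition \ref{prop:Noise_robustness_discrimination_ability_for_SingleLayer} to the $L$-th layer yields
\begin{eqnarray*}
	\frac{1}{2} \| \bar{\mathbf{y}}_1^{(L-1)} - \bar{\mathbf{y}}_2^{(L-1)} \|^2 \leq \| \bar{\mathbf{y}}_1^{(L)} - \bar{\mathbf{y}}_2^{(L)} \|^2 \leq \| \bar{\mathbf{y}}_1^{(L-1)} - \bar{\mathbf{y}}_2^{(L-1)} \|^2.
\end{eqnarray*}
Composing the two chains of inequalities — multiplying the inductive lower bound by $\tfrac12$ for the left side and passing the inductive upper bound straight through on the right — gives $\tfrac{1}{2^L} \| \mathbf{x}_1 - \mathbf{x}_2 \|^2 \leq \| \bar{\mathbf{y}}_1^{(L)} - \bar{\mathbf{y}}_2^{(L)} \|^2 \leq \| \mathbf{x}_1 - \mathbf{x}_2 \|^2$, which closes the induction.

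The argument is essentially a clean telescoping/composition of a per-layer estimate, so there is no deep obstacle; the only points requiring care are bookkeeping ones. I would make sure the dimension matching is spelled out — that the hypothesis $m^{(l)} = 2 n^{(l-1)}$ is exactly what guarantees $\mathbf{V}_{n^{(l-1)}}$ produces a vector in $\mathbb{R}^{2 n^{(l-1)}}$ that is a legitimate input to $\mathbf{W}^{(l)} \in \mathbb{R}^{n^{(l)} \times m^{(l)}}$ — and that Proposition \ref{prop:Noise_robustness_discrimination_ability_for_SingleLayer} is genuinely applicable at every layer, i.e. each $\mathbf{W}^{(l)}$ satisfies $n^{(l)} \ge m^{(l)}$ and $(\mathbf{W}^{(l)})^{\top} \mathbf{W}^{(l)} = \mathbf{I}_{m^{(l)}}$ as assumed in part (b). One subtlety worth a sentence: the strictness "$0 <$" in Proposition \ref{proposition:proposition_with_Vn} requires $\mathbf{x}_1 \neq \mathbf{x}_2$, which propagates through the layers precisely because the lower bound is strictly positive, so the discriminative (injectivity) property is preserved at every depth; for the stated theorem it suffices to note that the factor $2^{-L}$ lower bound is positive whenever the inputs differ.
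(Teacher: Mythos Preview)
Your proposal is correct and follows essentially the same approach as the paper: the paper states the theorem as the direct consequence of applying Proposition~\ref{prop:Noise_robustness_discrimination_ability_for_SingleLayer} at each layer and chaining the resulting inequalities, without writing out a formal proof block. Your explicit induction simply makes this implicit telescoping argument rigorous, and your additional bookkeeping remarks on dimension matching and strictness are welcome clarifications that the paper omits.
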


\noindent Note that a similar argument as in Remark~\ref{remark:noise_effect_in_q} holds here as well. We interpret that a small perturbation noise passes through the multilayer structure almost not attenuated. On the other hand, a large perturbation noise is attenuated in every layer. Using Theorem \ref{theo:RobustnessOfANN_forOthogonalMatrix}, we follow similar arguments as in Remark \ref{remark:system_perturbation} in regard to the perturbation of the weight matrices $\mathbf{W}^{(l)}$ in every layer of the HNF
.
\begin{myremark}[Sensitivity to the weight matrix]
	\textnormal{Consider a scenario where the weight matrix $\mathbf{W}^{(l)}$ is perturbed by $\Delta\mathbf{W}^{(l)}$. The effective weight matrix is $\mathbf{W}^{(l)} + \Delta\mathbf{W}^{(l)}$. We can show that
		\begin{align}
			\| \bar{\mathbf{y}}^{(L)} - \bar{\mathbf{y}}_{\Delta}^{(L)} \|^2 & \leq \prod_{l=1}^L \| \Delta \mathbf{W}^{(L)} \|_F^2  \| \mathbf{x} \|^2.
	\end{align}}
\end{myremark}

\section{Reduction of training cost}
\label{sec:Improved_feature_vector_generation}
In this section, we analyze the effectiveness of the weight matrix $\mathbf{V}$ in the sense of reducing the training cost. We show that the proposed HNF provides lower training costs as the number of layers increases. We also present how the proposed structure can be used to reduce the training cost of other learning methods which employ linear projection to the target. 

Consider a dataset containing $N$ samples of pair-wise $P$-dimensional input data $\mathbf{x} \in \mathbb{R}^{P}$ and $Q$-dimensional target vector $\mathbf{t} \in \mathbb{R}^{Q}$ as $\mathcal{D}=\{(\mathbf{x},\mathbf{t})\}$.
Let us construct two single layer neural networks and compare effectiveness of their feature vectors. In one network, we construct the feature vector as $\mathbf{y} = \mathbf{g}(\mathbf{W} \mathbf{x})$, and in the other network, we build the feature vector $\bar{\mathbf{y}} = \mathbf{g}(\mathbf{V}_n \, \mathbf{W} \, \mathbf{x})$. We use the same input vector $\mathbf{x}$, predetermined weight matrix $\mathbf{W} \in \mathbb{R}^{n \times P}$, and ReLU activation function $\mathbf{g}(\cdot)$ for both networks. However, in the second network, the effective weight matrix is $\mathbf{V}_n \mathbf{W}$ where 
$
\mathbf{V}_n = \left[  
\begin{array}{c}
\mathbf{I}_n \\
- \mathbf{I}_n
\end{array}
\right]  \in \mathbb{R}^{2n \times n}
$
is fully pre-determined. To predict the target, we use a linear projection of feature vector. Let the predicted target for the first network be $\mathbf{O} \mathbf{y}$, and the predicted target for the second network $ \bar{\mathbf{O}} \bar{\mathbf{y}}$. Note that $\mathbf{O} \in \mathbb{R}^{Q \times n}$ and $\bar{\mathbf{O}} \in \mathbb{R}^{Q \times 2n}$. By using $\ell_2$-norm regularization, we find optimal solutions for the following convex optimization problems.
\begin{subequations}
	\begin{flalign}
		\mathbf{O}^{\star} & = \arg\displaystyle\min_{\mathbf{O}} E \left[ \| \mathbf{t} - \mathbf{O} \mathbf{y} \|^2 \right] \,\, \mathrm{s.t.} \,\, \|\mathbf{O} \|_F^2 \leq \epsilon, & \\
		\bar{\mathbf{O}}^{\star} & = \arg\displaystyle\min_{\bar{\mathbf{O}}} E \left[ \| \mathbf{t} - \bar{\mathbf{O}} \bar{\mathbf{y}} \|^2 \right] \,\, \mathrm{s.t.} \,\, \| \bar{\mathbf{O}} \|_F^2 \leq \epsilon, \label{eq:Optimization_VW} &
	\end{flalign}  
\end{subequations}
where the expectation operation is done by sample averaging over all $N$ data points in the training dataset. The regularization parameter $\epsilon$ is the same for the two networks. By defining $\mathbf{z} \triangleq \mathbf{W} \mathbf{x}$, we have
\begin{eqnarray}
	\bar{\mathbf{y}} = 
	\left[
	\begin{array}{c}
		\mathbf{z}^{+} \\ -\mathbf{z}^{-}
	\end{array}
	\right] = 
	\left[
	\begin{array}{c}
		\mathbf{y} \\ -\mathbf{z}^{-}
	\end{array}
	\right].
\end{eqnarray}
The above relation is due to the special structure of $\mathbf{V}_n$ and the use of ReLU activation $\mathbf{g}(\cdot)$. Note that the solution $\bar{\mathbf{O}}^{\star} = [\mathbf{O}^{\star} \,\, \mathbf{0}]$ exists in the feasible set of the minimization \eqref{eq:Optimization_VW}, i.e., $\| [\mathbf{O}^{\star} \,\, \mathbf{0}] \|_F^2 \le \epsilon$, where $\mathbf{0}$ is a zero matrix of size $Q \times n$. Therefore, we can show the optimal costs of the two networks have the following relation
\begin{eqnarray}
	E \left[ \| \mathbf{t} - \bar{\mathbf{O}}^{\star} \bar{\mathbf{y}} \|^2 \right]  \leq E \left[ \| \mathbf{t} - \mathbf{O}^{\star} \mathbf{y} \|^2 \right],
\end{eqnarray}
where the equality happens when $\bar{\mathbf{O}}^{\star} = [\mathbf{O}^{\star} \,\, \mathbf{0}]$. Any other optimal solution of $\bar{\mathbf{O}}$ will lead to inequality relation due to the convexity of the cost. Therefore, we can conclude that the feature vector $\bar{\mathbf{y}}$ of the second network is richer than the feature vector $\mathbf{y}$ of the first network in the sense of reduced training cost.    The proposed structure provides an additional property for the feature vector $\bar{\mathbf{y}}$ which we state in the following proposition. The proof idea of the proposition will be used in the next section to construct a multilayer structure, and therefore, we present the proof here.

\begin{myprop}
	For the feature vector $\bar{\mathbf{y}} = \mathbf{g}(\mathbf{V}_n \mathbf{W} \mathbf{x})$, there exists an invertible mapping $\mathbf{h}(\bar{\mathbf{y}}) = \mathbf{x}$ when the weight matrix $\mathbf{W}$ is full-column rank. 
	\label{prop:invertibility_single_layer}
\end{myprop}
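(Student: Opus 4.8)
The plan is to exhibit the inverse map $\mathbf{h}$ explicitly using the block structure of $\bar{\mathbf{y}}$. Recall that with $\mathbf{z} \triangleq \mathbf{W}\mathbf{x}$ we have $\bar{\mathbf{y}} = \mathbf{g}(\mathbf{V}_n \mathbf{z}) = \begin{bmatrix} \mathbf{z}^{+} \\ -\mathbf{z}^{-} \end{bmatrix}$. So if we split $\bar{\mathbf{y}}$ into its top $n$ entries $\bar{\mathbf{y}}_{\mathrm{top}}$ and bottom $n$ entries $\bar{\mathbf{y}}_{\mathrm{bot}}$, then $\bar{\mathbf{y}}_{\mathrm{top}} = \mathbf{z}^{+}$ and $\bar{\mathbf{y}}_{\mathrm{bot}} = -\mathbf{z}^{-}$, both of which are non-negative vectors. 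The key observation is that for every index $i$, at most one of $z^{+}(i)$, $z^{-}(i)$ is nonzero, so $z(i) = z^{+}(i) + z^{-}(i) = \bar{y}_{\mathrm{top}}(i) - \bar{y}_{\mathrm{bot}}(i)$. Hence $\mathbf{z}$ is recovered from $\bar{\mathbf{y}}$ by the linear map $\mathbf{z} = \begin{bmatrix} \mathbf{I}_n & -\mathbf{I}_n \end{bmatrix} \bar{\mathbf{y}} = \mathbf{V}_n^{\top} \bar{\mathbf{y}}$.

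First I would establish this relation $\mathbf{z} = \mathbf{V}_n^{\top}\bar{\mathbf{y}}$ from the definitions of $\mathbf{g}$, $\mathbf{V}_n$, and the non-negative/non-positive decomposition $\mathbf{z} = \mathbf{z}^{+} + \mathbf{z}^{-}$. Then, since $\mathbf{W}$ is full-column rank, it has a left inverse $\mathbf{W}^{\dagger} = (\mathbf{W}^{\top}\mathbf{W})^{-1}\mathbf{W}^{\top}$ with $\mathbf{W}^{\dagger}\mathbf{W} = \mathbf{I}_P$, so $\mathbf{x} = \mathbf{W}^{\dagger}\mathbf{z} = \mathbf{W}^{\dagger}\mathbf{V}_n^{\top}\bar{\mathbf{y}}$. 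I would therefore define
\begin{eqnarray*}
	\mathbf{h}(\bar{\mathbf{y}}) \triangleq \mathbf{W}^{\dagger} \mathbf{V}_n^{\top} \bar{\mathbf{y}},
\end{eqnarray*}
and verify $\mathbf{h}(\bar{\mathbf{y}}) = \mathbf{x}$ by the chain of equalities above.

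There is no serious obstacle here; the only point requiring care is the index-wise argument that $z^{+}(i) + z^{-}(i) = z(i)$ survives the ReLU, i.e. that no information is lost because the two blocks of $\mathbf{V}_n$ separate the positive and negative parts of each coordinate rather than mixing them — this is exactly what the special structure of $\mathbf{V}_n$ in \eqref{eq:V_definition} buys us, and it is the same mechanism already used to prove $\|\bar{\mathbf{y}}\|^2 = \|\mathbf{z}\|^2$ in Proposition \ref{proposition:proposition_with_Vn}. I would also remark that the map $\mathbf{h}$ is linear (hence continuous), and that the construction extends layer by layer: composing the per-layer inverses $\bar{\mathbf{y}}^{(l)} \mapsto (\mathbf{W}^{(l)})^{\dagger} \mathbf{V}_{n^{(l)}}^{\top} \bar{\mathbf{y}}^{(l)}$ recovers $\mathbf{x}$ from $\bar{\mathbf{y}}^{(L)}$, which is the point flagged as being reused in the next section.
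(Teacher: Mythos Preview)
Your proposal is correct and is essentially the same argument as the paper's: both recover $\mathbf{z}$ from $\bar{\mathbf{y}}$ via the linear map $[\mathbf{I}_n\ -\mathbf{I}_n]\bar{\mathbf{y}}$ and then apply $\mathbf{W}^{\dagger}$. The only cosmetic difference is that the paper names this identity the ``Lossless Flow Property'' $\mathbf{U}_n\mathbf{g}(\mathbf{V}_n\mathbf{z})=\mathbf{z}$ with $\mathbf{U}_n=[\mathbf{I}_n\ -\mathbf{I}_n]$ and cites it from prior work, whereas you derive it directly from the coordinatewise decomposition $z(i)=z^{+}(i)+z^{-}(i)$; note also that your $\mathbf{V}_n^{\top}$ is exactly the paper's $\mathbf{U}_n$.
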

\begin{proof}
	We now state Lossless Flow Property (LFP), as used in \cite{SSFN_Saikat,MI_for_NN_2018}. A non-linear function $\mathbf{g}(\cdot)$ holds the lossless flow property (LFP) if there exist two linear transformations $\mathbf{V}$ and $\mathbf{U}$ such that $\mathbf{U} \mathbf{g} (\mathbf{V} \mathbf{z}) = \mathbf{z}, \forall \mathbf{z} \in \mathbb{R}^n$. It is shown in \cite{SSFN_Saikat} that ReLU holds LFP. In other words, if 
	$
	\mathbf{V} \triangleq \mathbf{V}_n = \left[  
	\begin{array}{c}
	\mathbf{I}_n \\
	- \mathbf{I}_n
	\end{array}
	\right]  \in \mathbb{R}^{2n \times n}
	\,\, \mathrm{and} \,\, 
	\mathbf{U} \triangleq \mathbf{U}_n = \left[  
	\mathbf{I}_n  \,\, - \mathbf{I}_n
	\right] \in \mathbb{R}^{n \times 2n}
	$,
	then $\mathbf{U}_n \mathbf{g} (\mathbf{V}_n \mathbf{z}) = \mathbf{z}$ holds for every $\mathbf{z}$ when $\mathbf{g}(\cdot)$ is ReLU. Letting $\mathbf{z} = \mathbf{W}\mathbf{x}$, we can easily find  $\mathbf{x} = \mathbf{W}^{\dag} \mathbf{z} = \mathbf{W}^{\dag} \mathbf{U}_n \bar{\mathbf{y}}$, where $\dag$ denotes pseudo-inverse when $\mathbf{W}$ is a full-column rank matrix. Therefore, the resulting inverse mapping $\mathbf{h}$ would be linear. 
\end{proof}

\subsection{Reduction of Training Cost with Depth}
\label{subsec:Multilayer_feature_design}
In this section, we show that the proposed HNF provides lower training costs as the number of layers increases. Consider an $L$-layer feed-forward network according to our proposed structure on the weight matrices as follows
\begin{eqnarray}
	\bar{\mathbf{y}}^{(L)} = \mathbf{g}(\mathbf{V}_{n^{(L)}} \mathbf{W}^{(L)} \mathbf{g}(\hdots \mathbf{g}(\mathbf{V}_{n^{(1)}} \mathbf{W}^{(1)} \mathbf{x}))).
	\label{eq:L_layer_network}
\end{eqnarray}
Note that $2n^{(L)}$ is the number of neurons in the $L$-th layer of the network. The input-output relation in each layer is characterized by
\begin{subequations}
	\begin{align}
		\bar{\mathbf{y}}^{(1)} & = \mathbf{g}(\mathbf{V}_{n^{(1)}} \mathbf{W}^{(1)}\mathbf{x}), \label{eq:Initial_feature_vector}\\
		\bar{\mathbf{y}}^{(l)} &= \mathbf{g}(\mathbf{V}_{n^{(l)}} \mathbf{W}^{(l)}\bar{\mathbf{y}}^{(l-1)}), \,\,\, 2 \le l \le L,
	\end{align}
\end{subequations}
where $\mathbf{W}^{(1)} \in \mathbb{R}^{n^{(1)} \times P}$, $\mathbf{W}^{(l)} \in \mathbb{R}^{n^{(l)} \times m^{(l)}}$, and $m^{(l)}=2n^{(l-1)}$ for $2 \le l \le L$. Let the predicted target using the $l$-th layer feature vector $\bar{\mathbf{y}}^{(l)}$ be $\mathbf{O}_l \bar{\mathbf{y}}^{(l)}$. We find optimal solutions for the following convex optimization problems
\begin{subequations}
	\begin{flalign}
		\mathbf{O}_{l-1}^{\star} & = \arg\displaystyle\min_{\mathbf{O}} E \left[ \| \mathbf{t} - \mathbf{O} \bar{\mathbf{y}}^{(l-1)} \|^2 \right] \, \mathrm{s.t.} \, \|\mathbf{O} \|_F^2 \!\leq\! \epsilon_{l-1}, & \\
		\mathbf{O}_{l}^{\star} & = \arg\displaystyle\min_{\mathbf{O}} E \left[ \| \mathbf{t} - \mathbf{O} \bar{\mathbf{y}}^{(l)} \|^2 \right] \, \mathrm{s.t.} \, \|\mathbf{O} \|_F^2 \!\leq\! \epsilon_{l}. \label{eq:opt_lth_layer} &
	\end{flalign}
\end{subequations}
Let us define $\mathbf{z}^{(l)} \triangleq \mathbf{W}^{(l)}\mathbf{y}^{(l-1)}$. Assuming that weight matrices $\mathbf{W}^{(l)}$ are full-column rank, we can similarly derive $\mathbf{y}^{(l-1)} = [\mathbf{W}^{(l)}]^{\dag} \mathbf{z}^{(l)}$. By using Proposition \ref{prop:invertibility_single_layer}, we have $\mathbf{z}^{(l)} = \mathbf{U}_{n^{(l)}} \bar{\mathbf{y}}^{(l)}$ and then, we can write the following relations
\begin{align}
	\bar{\mathbf{y}}^{(l-1)} = [\mathbf{W}^{(l)}]^{\dag} \mathbf{z}^{(l)} = [\mathbf{W}^{(l)}]^{\dag} \mathbf{U}_{n^{(l)}} \bar{\mathbf{y}}^{(l)}, \label{eq:y_l_inverse}
\end{align}
where $\mathbf{U}_{n^{(l)}} = [\mathbf{I}_{n^{(l)}} \,\, -\mathbf{I}_{n^{(l)}}]$. If we choose $\mathbf{O}_{l}^{\star} = \mathbf{O}_{l-1}^{\star} [\mathbf{W}^{(l)}]^{\dag} \mathbf{U}_{n^{(l)}}$, by using \eqref{eq:y_l_inverse}, we can easily see that $\mathbf{O}_{l}^{\star} \bar{\mathbf{y}}^{(l)} = \mathbf{O}_{l-1}^{\star} \bar{\mathbf{y}}^{(l-1)}$. Therefore, by including $\mathbf{O}_{l-1}^{\star} [\mathbf{W}^{(l)}]^{\dag} \mathbf{U}_{n^{(l)}}$ in the feasible set of the minimization \eqref{eq:opt_lth_layer}, we can guarantee that the optimal cost of $l$-th layer would be lower or equal than that that of layer $(l-1)$. In particular, by choosing $\epsilon_{l} = \| \mathbf{O}_{l-1}^{\star} [\mathbf{W}^{(l)}]^{\dag} \mathbf{U}_{n^{(l)}} \|_F^2$, we can see that the optimal costs follow the relation
\begin{eqnarray}
	E \left[ \| \mathbf{t} - \mathbf{O}_{l}^{\star} \bar{\mathbf{y}}^{(l)} \|^2 \right] \leq E \left[ \| \mathbf{t} - \mathbf{O}_{l-1}^{\star} \bar{\mathbf{y}}^{(l-1)} \|^2 \right],
	\label{eq:relation_between_costs_layer_wise}
\end{eqnarray}
where the equality happens when we have $\mathbf{O}_{l}^{\star} = \mathbf{O}_{l-1}^{\star} [\mathbf{W}^{(l)}]^{\dag} \mathbf{U}_{n^{(l)}}$. Any other optimal solution of $\mathbf{O}_{l}$ will lead to inequality relation due to the convexity of the cost. Therefore, we can conclude that the feature vector $\bar{\mathbf{y}}^{(l)}$ of an $l$-layer network is richer than the feature vector $\bar{\mathbf{y}}^{(l-1)}$ of an $(l-1)$-layer network in the sense of reduced training cost. Note that if we choose the weight matrix $\mathbf{W}^{(l)}$ to be orthonormal, then
\begin{align}
	\epsilon_{l} &= \| \mathbf{O}_{l-1}^{\star} [\mathbf{W}^{(l)}]^{\top} \mathbf{U}_{n^{(l)}} \|_F^2 \nonumber\\
	& = \mathrm{trace}\left[ 2 \mathbf{I}_{n^{(l)}}  \mathbf{W}^{(l)} [\mathbf{O}_{l-1}^{\star}]^{\top} \mathbf{O}_{l-1}^{\star} [\mathbf{W}^{(l)}]^{\top}  \right] \nonumber\\
	& = 2 \, \mathrm{trace}\left[  \mathbf{W}^{(l)} [\mathbf{O}_{l-1}^{\star}]^{\top} \mathbf{O}_{l-1}^{\star} [\mathbf{W}^{(l)}]^{\top}  \right] \nonumber\\
	& = 2 \, \mathrm{trace}\left[  [\mathbf{W}^{(l)}]^{\top} \mathbf{W}^{(l)} [\mathbf{O}_{l-1}^{\star}]^{\top} \mathbf{O}_{l-1} ^{\star}  \right] \nonumber\\
	& = 2 \, \mathrm{trace}[ [\mathbf{O}_{l-1}^{\star}]^{\top} \mathbf{O}_{l-1}^{\star} ] = 2 \| \mathbf{O}_{l-1}^{\star} \|_F^2, \label{eq:epsilon_DCT}
\end{align}
where we have used the fact that $\mathbf{U}_{n^{(l)}} [\mathbf{U}_{n^{(l)}}]^{\top} = 2 \mathbf{I}_{n^{(l)}}$. As we have $\| \mathbf{O}_{l-1} \|_F^2 \leq \epsilon_{l-1}$, a sufficient condition to guarantee the cost relation \eqref{eq:relation_between_costs_layer_wise} is to use the relation between regularization parameters as $\epsilon_{l} \geq 2\epsilon_{l-1}$. We can choose $\epsilon_{l} = 2\epsilon_{l-1}=2^{l-1} \epsilon_1$. Note that the regularization parameter $\epsilon_1$ in the first layer can also be determined analytically. Consider $\mathbf{O}^{\star}_{ls}$ to be the solution of the following least-squares optimization
\begin{eqnarray}
	\mathbf{O}_{\text{ls}}^{\star} = \arg\displaystyle\min_{\mathbf{O}} E \left[ \| \mathbf{t} - \mathbf{O} \mathbf{x} \|^2 \right].
	\label{eq:LS_solution}
\end{eqnarray}
Note that the above minimization has a closed-form solution. Similar to the argument in \eqref{eq:relation_between_costs_layer_wise}, by choosing $\epsilon_1 = \| \mathbf{O}_{\text{ls}}^{\star} [\mathbf{W}^{(1)}]^{\dag} \mathbf{U}_{n^{(1)}} \|_F^2$, it can be easily seen that 
\begin{eqnarray}
	E \left[ \| \mathbf{t} - \mathbf{O}_{1}^{\star} \bar{\mathbf{y}}^{(1)} \|^2 \right] \le E \left[ \| \mathbf{t} - \mathbf{O}^{\star}_{\text{ls}} \mathbf{x} \|^2 \right],
	\label{eq:relation_cost_LS}
\end{eqnarray}
where the equality happens only when we have $\mathbf{O}_1^{\star} = \mathbf{O}_{\text{ls}}^{\star} [\mathbf{W}^{(1)}]^{\dag} \mathbf{U}_{n^{(1)}}$. Similar to Proposition \ref{prop:invertibility_single_layer}, we can prove the following proposition regarding the invertibility of the feature vector at the $l$-th layer of the proposed structure.
\begin{myprop}
	For the feature vector $\bar{\mathbf{y}}^{(L)}$ in \eqref{eq:L_layer_network}, there exists an invertible mapping function $\bar{\mathbf{h}}(\bar{\mathbf{y}}^{(L)}) = \mathbf{x}$ when the set of  weight matrices $\{ \mathbf{W}^{(l)} \}_{l=1}^L$ are full-column rank. 
\end{myprop}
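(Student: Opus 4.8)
The plan is to prove this by a backward, layer-by-layer induction that simply iterates the single-layer argument of Proposition~\ref{prop:invertibility_single_layer}. The key ingredient is the Lossless Flow Property of ReLU already invoked in the excerpt: $\mathbf{U}_{n^{(l)}} \mathbf{g}(\mathbf{V}_{n^{(l)}} \mathbf{z}) = \mathbf{z}$ for every $\mathbf{z}$, where $\mathbf{U}_{n^{(l)}} = [\mathbf{I}_{n^{(l)}} \,\, -\mathbf{I}_{n^{(l)}}]$. Applying this at the top layer to $\bar{\mathbf{y}}^{(L)} = \mathbf{g}(\mathbf{V}_{n^{(L)}} \mathbf{W}^{(L)} \bar{\mathbf{y}}^{(L-1)})$ recovers $\mathbf{z}^{(L)} = \mathbf{U}_{n^{(L)}} \bar{\mathbf{y}}^{(L)}$, and since $\mathbf{W}^{(L)}$ is full-column rank, $[\mathbf{W}^{(L)}]^{\dag}\mathbf{W}^{(L)} = \mathbf{I}_{m^{(L)}}$, so $\bar{\mathbf{y}}^{(L-1)} = [\mathbf{W}^{(L)}]^{\dag}\mathbf{U}_{n^{(L)}}\bar{\mathbf{y}}^{(L)}$, exactly as in~\eqref{eq:y_l_inverse}.

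First I would fix the notation $\bar{\mathbf{y}}^{(0)} \triangleq \mathbf{x}$ and define the linear operators $\mathbf{A}^{(l)} \triangleq [\mathbf{W}^{(l)}]^{\dag}\mathbf{U}_{n^{(l)}}$. The induction hypothesis is that $\bar{\mathbf{y}}^{(l-1)} = \mathbf{A}^{(l)} \bar{\mathbf{y}}^{(l)}$ for each $l = L, L-1, \dots, 1$; the inductive step is verified by substituting $\bar{\mathbf{y}}^{(l)} = \mathbf{g}(\mathbf{V}_{n^{(l)}}\mathbf{W}^{(l)}\bar{\mathbf{y}}^{(l-1)})$ into $\mathbf{A}^{(l)}\bar{\mathbf{y}}^{(l)}$ and applying LFP followed by $[\mathbf{W}^{(l)}]^{\dag}\mathbf{W}^{(l)} = \mathbf{I}_{m^{(l)}}$, which is where the full-column-rank hypothesis on $\mathbf{W}^{(l)}$ enters. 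Composing these identities downward from $l=L$ to $l=1$ telescopes to $\mathbf{x} = \mathbf{A}^{(1)}\mathbf{A}^{(2)}\cdots\mathbf{A}^{(L)}\bar{\mathbf{y}}^{(L)}$, so one may take $\bar{\mathbf{h}}$ to be the single linear operator $\bar{\mathbf{h}}(\cdot) = \bigl( \prod_{l=1}^{L} [\mathbf{W}^{(l)}]^{\dag}\mathbf{U}_{n^{(l)}} \bigr)(\cdot)$, with the product ordered so that the $l=1$ factor is leftmost and the $l=L$ factor rightmost. I would also remark that, as in the single-layer case, even though the forward map~\eqref{eq:L_layer_network} is nonlinear, the recovered inverse $\bar{\mathbf{h}}$ is linear.

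I do not expect a genuine obstacle here: the statement is a clean telescoping composition of Proposition~\ref{prop:invertibility_single_layer}. The only points requiring care are the bookkeeping of matrix dimensions (using $m^{(l)} = 2 n^{(l-1)}$ so that $\mathbf{U}_{n^{(l)}} \bar{\mathbf{y}}^{(l)}$ has the right size to be multiplied by $[\mathbf{W}^{(l)}]^{\dag}$) and the order of factors in the product, together with making explicit that $\bar{\mathbf{h}}$ is a left inverse of the forward map on the reachable set of features — which is all that "invertible mapping" means in this context, since the forward map is not injective on all of $\mathbb{R}^{2n^{(L)}}$. Finally I would note that full-column rank is needed at \emph{every} layer, not merely the first: if some $\mathbf{W}^{(l)}$ were rank-deficient then $[\mathbf{W}^{(l)}]^{\dag}$ would fail to be a left inverse and that step of the telescope would collapse.
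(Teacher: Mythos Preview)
Your proposal is correct and follows exactly the approach the paper takes: the paper's own proof is the single sentence ``It can be proved by repeatedly using the lossless flow property (LFP) similar to Proposition~\ref{prop:invertibility_single_layer},'' and your layer-by-layer telescoping via $\mathbf{A}^{(l)} = [\mathbf{W}^{(l)}]^{\dag}\mathbf{U}_{n^{(l)}}$ is precisely the fleshed-out version of that instruction. Your additional remarks on the linearity of $\bar{\mathbf{h}}$, the dimension bookkeeping, and the need for full column rank at every layer are all accurate and go somewhat beyond what the paper spells out.
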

\begin{proof}
	It can be proved by repeatedly using the lossless flow property (LFP) similar to Proposition \ref{prop:invertibility_single_layer}.
\end{proof}

\subsection{Reduction of Training Cost of ELM}
Note that the feature vector $\bar{\mathbf{y}}^{(1)}$ in \eqref{eq:Initial_feature_vector} can be any feature vector that is used for linear projection to the target in any other learning method. In Subsection \ref{subsec:Multilayer_feature_design}, we assume $\bar{\mathbf{y}}^{(1)}$ to be the feature vector constructed from $\mathbf{x}$ using the matrix $\mathbf{V}$; and therefore, the regularization parameter $\epsilon_1$ is derived to guarantee performance improvement compared to least-square method as shown in \eqref{eq:relation_cost_LS}. A potential extension would be to build the proposed HNF using the feature vector $\bar{\mathbf{y}}^{(1)}$ from other methods that employ linear projection to estimate the target. For example, the extreme learning machine (ELM) uses a linear projection of the nonlinear features vector to predict the target \cite{elm_Huang2012}. In the following, we build the proposed HNF by employing the feature vector used in ELM to improve the performance. 

\begin{table*}[t!]
	\centering
	\caption{Test classification accuracy of the proposed HNF for different datasets using random matrices}
	\label{table:Database_for_classification}
	\setlength{\tabcolsep}{5pt}
	\renewcommand{\arraystretch}{1.5}
	\begin{tabular}{ |c|c|c|c|c||c|c|c||c|c|c|c||c| } 
		\hline
		\multirow{2}{*}{Dataset} & size of & size of & Input & Number of & \multicolumn{3}{|c||}{Proposed HNF} & ELM & \multicolumn{3}{|c||}{Proposed HNF} & state-of-the-art \\ \cline{6-12}
		& training data & testing data & dimension $(P)$ & classes $(Q)$ & Accuracy & $n^{(1)}$ & $L$ & Accuracy & Accuracy & $n^{(1)}$ & $L$ & [reference] \\ \hline \hline
		Letter & 13333 & 6667 & 16 & 26 & 93.3 & $250$ & 5 & 88.3 & 94.6 & $1000$ & 3 & 95.8 \cite{Tang_HELM_2016} \\ 
		\hline
		Shuttle & 43500 & 14500 & 9 & 7 & 99.3 & $250$ & 5 & 99.0 & 99.6 & $1000$ & 3 & 99.9 \cite{elm_Huang2012} \\ 
		\hline
		MNIST & 60000 & 10000 & 784 & 10 & 97.1 & $1000$ & 5 & 96.9 & 97.7 & $4000$ & 3 & 99.7 \cite{Wan_DropConnect_2013} \\ 
		\hline
	\end{tabular}
\end{table*}

Similar to equation \eqref{eq:relation_between_costs_layer_wise}, we can show that it is possible to improve the feature vector of ELM in the sense of training cost by using the proposed HNF. Consider $\bar{\mathbf{y}}^{(1)} = \mathbf{g}(\mathbf{W}^{(1)} \mathbf{x})$, to be feature vector used in ELM for linear projection to the target. In the ELM framework, $\mathbf{W}^{(1)} \in \mathbb{R}^{n^{(1)} \times P}$ is an instance of normal distribution, not necessarily full-column rank, and $\mathbf{g}(\cdot)$ can be any activation function, not necessarily ReLU. The optimal mapping to the target in ELM is found by solving the following minimization problem.
\begin{eqnarray}
	\mathbf{O}_{\text{elm}}^{\star} = \arg\displaystyle\min_{\mathbf{O}} E \left[ \| \mathbf{t} - \mathbf{O} \bar{\mathbf{y}}^{(1)} \|^2 \right].
	\label{eq:ELM_solution}
\end{eqnarray}
Note that this minimization problem has a closed-form solution. We construct the feature vector in the second layer of the HNF as 
\begin{equation}
	\bar{\mathbf{y}}^{(2)} = \mathbf{g}(\mathbf{V}_{n^{(2)}}\mathbf{W}^{(2)} \bar{\mathbf{y}}^{(1)}),
	\label{eq:initial_feature_vector}
\end{equation}
where $\mathbf{W}^{(2)} \in \mathbb{R}^{n^{(2)} \times m^{(2)}}$ and $m^{(2)}=n^{(1)}$. The optimal mapping to the target by using this feature vector can be found by solving 
\begin{eqnarray}
	\mathbf{O}_{2}^{\star} = \arg\displaystyle\min_{\mathbf{O}} E \left[ \| \mathbf{t} - \mathbf{O} \bar{\mathbf{y}}^{(2)} \|^2 \right] \, \mathrm{s.t.} \, \|\mathbf{O} \|_F^2 \!\leq\! \epsilon_2,
\end{eqnarray}
where $\epsilon_2$ is the regularization parameter. By choosing $\epsilon_{2} = \| \mathbf{O}_{\text{elm}}^{\star} [\mathbf{W}^{(2)}]^{\dag} \mathbf{U}_{n^{(2)}} \|_F^2$, we can see that the optimal costs follow the relation
\begin{eqnarray}
	E \left[ \| \mathbf{t} - \mathbf{O}_2^{\star} \bar{\mathbf{y}}^{(2)} \|^2 \right] \leq E \left[ \| \mathbf{t} - \mathbf{O}_{\text{elm}}^{\star} \bar{\mathbf{y}}^{(1)} \|^2 \right],
	\label{eq:relation_cost_ELM}
\end{eqnarray}
where the equality happens when we have $\mathbf{O}_{2}^{\star} = \mathbf{O}_{\text{elm}}^{\star} [\mathbf{W}^{(2)}]^{\dag} \mathbf{U}_{n^{(2)}}$. Otherwise, the inequality has to follow.    Similarly, we can continue to add more layer to improve the performance. Specifically, for $l$-th layer of the HNF, we have $\bar{\mathbf{y}}^{(l)} = \mathbf{g}(\mathbf{V}_{n^{(l)}}\mathbf{W}^{(l)} \bar{\mathbf{y}}^{(l-1)})$, and we can show that
equation \eqref{eq:relation_between_costs_layer_wise} holds here as well when the set of matrices $\{ \mathbf{W}^{(l)} \}_{l=2}^L$ are full-column rank.

\subsection{Practical Considerations}
The dimension of feature vector $\bar{\mathbf{y}}^{(l)}$ increases as the number of layers increases. For a multi-layer feedforward network, if we use orthonormal matrix $\mathbf{W}^{(l)}$ for $l$-th layer, then each layer produces a feature vector that has at least twice the dimension of the input feature vector. At the $L$-th layer, we get the dimension $2^L$ times of the input data dimension. Note that $\bar{\mathbf{y}}= \mathbf{g}(\mathbf{V}_n \mathbf{z})$ is norm preserving by Proposition \ref{proposition:proposition_with_Vn}, that means $\| \bar{\mathbf{y}} \|^2 = \| \mathbf{z} \|^2$. Using this principle successively, the full network is also norm preserving, that means $\|\bar{\mathbf{y}}^{(L)}\|^2 = \| \mathbf{x} \|^2$. Therefore, as the layer number increases the amplitudes of scalars of the feature vector $\bar{\mathbf{y}}^{(L)}$ diminishes at the rate of $2^L$. We show that the proposed HNF does not require a large number of layers to improve the performance. This also answers the natural question that whether many layers are practically required for an ANN. Note that since the dimension of the feature vector $\bar{\mathbf{y}}^{(l)}$ is growing exponentially as $2^l$, the proposed HNF is not suitable for cases where the input dimension is too large. One way to circumvent this issue is to employ the kernel trick \cite{Bishop} by using the feature vector $\bar{\mathbf{y}}^{(l)}$. We will address this solution in future works.

\section{Results and discussion}
\label{sec:Experiments}
In this section, we carry out experiments to validate the performance improvement and observe the effect of using the matrix $\mathbf{V}$ in the architecture of an HNF. We report our results for three popular datasets in the literature as in Table \ref{table:Database_for_classification}. Note that we only choose the datasets where the input dimension is not very large due to the computational complexities. Letter dataset \cite{Letter_dataset} contains a 16-dimensional feature vector for each of the 26 English alphabets from A to Z. Shuttle dataset \cite{Shuttle_dataset} belongs to the STATLOG project and contains a 9-dimensional feature vector that deals with the positioning of radiators in the space shuttles. MNIST dataset \cite{MNIST_dataset} contains grey-scale $28 \times 28$-pixel images of hand-written digits. Note that in all three datasets, the target vector $\mathbf{t}$ is one-hot vector of dimension $Q$ (the number of classes). The optimization method used for solving the minimization problem \eqref{eq:opt_lth_layer} is the Alternating Direction Method of Multipliers (ADMM) \cite{ADMM_Boyd}. The number of iterations of ADMM is set to 100 in all the simulations.
    
We carry out two sets of experiments. First, we implement the proposed HNF with a fixed number of layers by using instances of random matrices for designing the weight matrix in every layer. In this setup, the weight matrix $\mathbf{W}^{(l)} \in \mathbb{R}^{n^{(l)} \times m^{(l)}}$ is an instance of Gaussian distribution with appropriate size $n^{(l)} \ge m^{(l)}$ and entries drawn independently from $\mathcal{N}(0,1)$ to ensure being full-column rank. Second, we construct the proposed HNF by using discrete cosine transform (DCT), as an example of full-column rank weight matrix, instead of random matrices. In this scenario, we may need to apply zero-padding before DCT to build the weight matrix $\mathbf{W}^{(l)}$ with appropriate dimension. The step size in the ADMM algorithm is set accordingly in each of these experiments. Finally, we compare the performance and computational complextiy of HNF and backpropagation over the same-size network.

\begin{figure*}[t!]
	\centering
	\makebox[\linewidth][c]{\includegraphics[width=220mm]{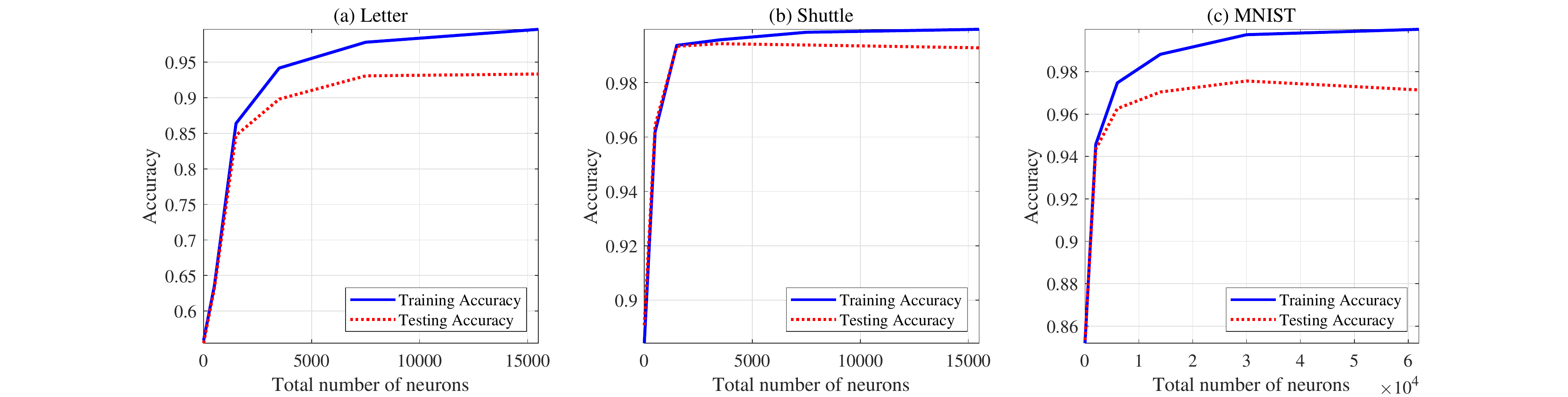}}
	\caption{Training and testing accuracy against size of one instance of HNF. Size of an $L$-layer HNF is represented by the number of random matrix based nodes, counted as $\sum_{l=1}^{L} 2 n^{(l)}$. Here, $L=5$ for all three datasets. The number of nodes in the first layer ($2n^{(1)}$) is set according to Table \ref{table:Database_for_classification} for each dataset.}
	\label{fig:accuracy_vs_size_ANN_LS}
\end{figure*}

\begin{figure*}[t!]
	\centering
	\makebox[\linewidth][c]{\includegraphics[width=220mm]{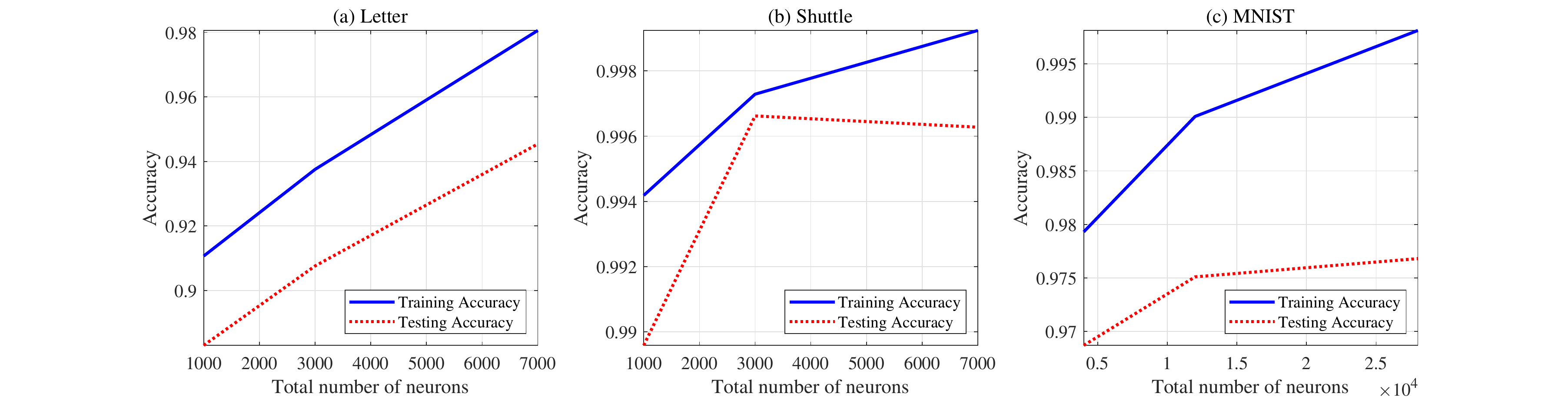}}
	\caption{Training and testing accuracy against size of one instance of HNF using ELM feature vector in the first layer. Size of an $L$-layer HNF is represented by the number of random matrix based nodes, counted as $n^{(1)} + \sum_{l=2}^{L} 2n^{(l)}$. Here, $L=3$ for all three datasets. The number of nodes in the first layer ($n^{(1)}$) is set according to Table \ref{table:Database_for_classification} for each dataset.}
	\label{fig:accuracy_vs_size_ANN_ELM}
\end{figure*}

\subsection{HNF Using Random Matrix}
\label{subsec:HNF_Random}
In this subsection, we construct the proposed HNF by using instances of Gaussian distribution to design the weight matrix $\mathbf{W}^{(l)}$. In particular, the entries of the weight matrix are drawn independently from $\mathcal{N}(0,1)$. For simplicity, the number of nodes is chosen according to $n^{(l)}=m^{(l)}$ for $l \ge 2$ in all the experiment. The number of nodes in the first layer $2n^{(1)}$ is chosen for each dataset individually such that it satisfies $n^{(1)} \ge P$ for every dataset with input dimension $P$, as reported in Table \ref{table:Database_for_classification}. The step size in the ADMM algorithm is set to $10^{-7}$ in all the simulations in this subsection. 
    
    We implement two different scenarios. First, we implement the proposed HNF with a fixed number of layers and show performance improvement throughout the layers. In this setup, the only hyperparameter that needs to be chosen is the number of nodes in the first layer $2n^{(1)}$. Note that the regularization parameter $\epsilon_1$ is chosen such that it guarantees \eqref{eq:relation_cost_LS}, and therefore eliminates the need for cross-validation in the first layer. Second, we build the proposed HNF by using the ELM feature vector in the first layer as in \eqref{eq:initial_feature_vector} and show the performance improvement throughout the layers. In this setup, the only hyperparameter that needs to be chosen is the number of nodes in the first layer $n^{(1)}$ which is the number of nodes of ELM to be exact. It has been shown that ELM performs better as the number of hidden neuron increases \cite{MI_for_NN_2018}, therefore, we choose a sufficiently large hidden neurons to make sure that ELM is performing at its best. Note that the regularization parameter $\epsilon_1$ is chosen such that it guarantees \eqref{eq:relation_cost_ELM}, and therefore, eliminates the need for cross-validation. Finally, we present the classification performance of the corresponding state-of-the-art results in Table \ref{table:Database_for_classification}.
    
    The performance results of the proposed HNF with $L=5$ layers are reported in Table \ref{table:Database_for_classification}. We report test classification accuracy as a measure to evaluate the performance. Note that the number of neurons $2n^{(1)}$ in the first layer of HNF is chosen appropriately for each dataset such that it satisfies $n^{(1)} \ge P$. For example, for MNIST dataset, we set $n^{(1)}=1000 \ge P=784$. The performance improvement in each layer of HNF is given in Figure \ref{fig:accuracy_vs_size_ANN_LS}, where train and test classification accuracy is shown versus total number of nodes in the network $\sum_{l=1}^{L} 2 n^{(l)}$. Note that the total number of nodes being zero corresponds to direct mapping of the input $\mathbf{x}$ to the target using least-squares according to \eqref{eq:LS_solution}. It can be seen that the proposed HNF provides a substantial improvement in performance with a small number of layers.
    
    The corresponding performance for the case of using the ELM feature vector in the first layer of HNF is reported in Table \ref{table:Database_for_classification}. It can be seen that HNF provides a tangible improvement in performance compared to ELM. Note that the number of neurons in the first layer $n^{(1)}$ is, in fact, the same as the number of neurons used in ELM. We choose $n^{(1)}$ to get the best performance for ELM in every dataset individually. The number of layers in the network is set to $L=3$ to avoid the increasing computational complexity. The performance improvement in each layer of HNF in this case is given in Figure \ref{fig:accuracy_vs_size_ANN_ELM}, where train and test classification accuracy is shown versus total number of nodes in the network $n^{(1)} + \sum_{l=2}^{L} 2n^{(l)}$. Note that the initial point corresponding to $n^{(1)}$ is in fact equal to the ELM performance reported in Table \ref{table:Database_for_classification}, which is derived according to \eqref{eq:ELM_solution}.
    
    Finally, we compare the performance of the proposed HNF with the state-of-the-art performance for these three datasets. We can see that the proposed HNF provides competitive performance compared to state-of-the-art results in the literature. It is worth mentioning that we have not used any pre-processing technique to improve the performance as in the the state-of-the-art, but it can be done in future works.

\begin{figure*}[t!]
	\centering
	\makebox[\linewidth][c]{\includegraphics[width=220mm]{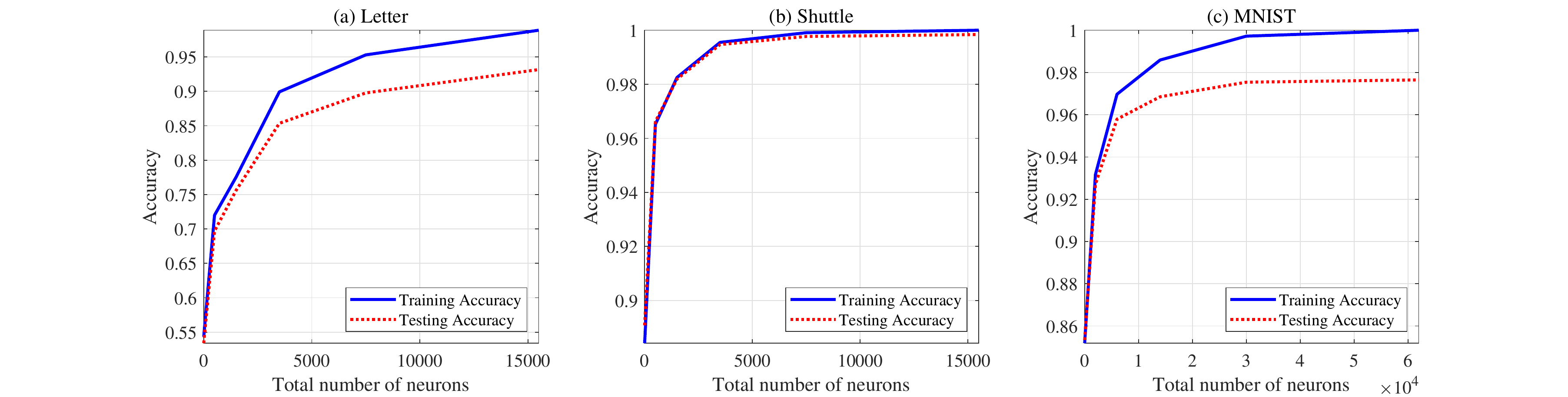}}
	\caption{Training and testing accuracy against size of HNF using DCT in every layer. Size of an $L$-layer HNF is represented by the number of DCT-based nodes, counted as $\sum_{l=1}^{L} 2 n^{(l)}$. Here, $L=5$ for all three datasets. The number of nodes in the first layer ($2n^{(1)}$) is set according to Table \ref{table:Database_for_classification_DCT} for each dataset.}
	\label{fig:accuracy_vs_size_ANN_LS_DCT}
\end{figure*}

\begin{figure*}[t!]
	\centering
	\makebox[\linewidth][c]{\includegraphics[width=220mm]{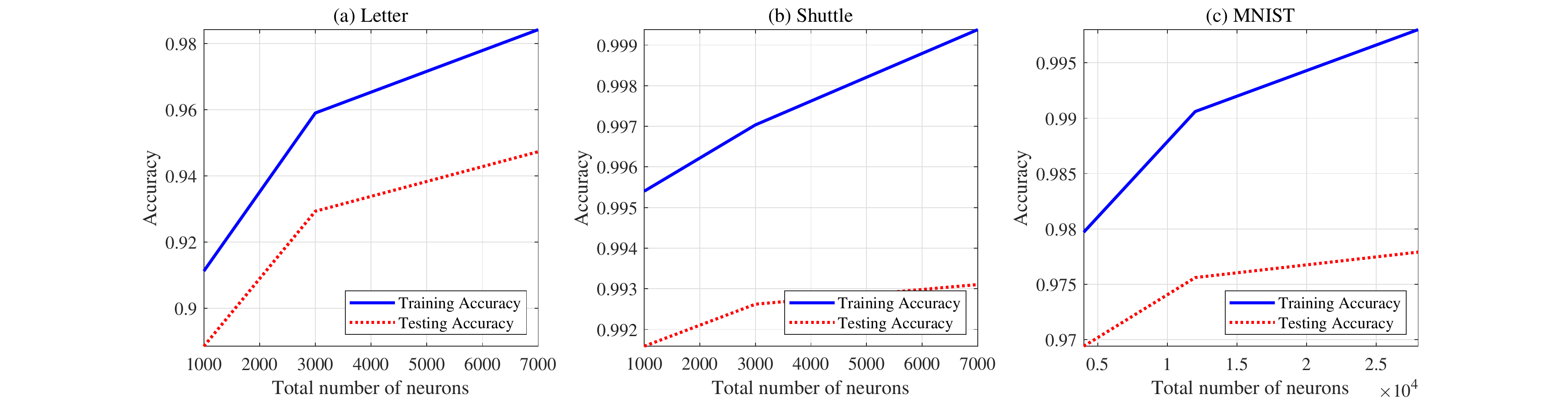}}
	\caption{Training and testing accuracy against size of HNF using ELM feature vector in the first layer and DCT in the next layers. Size of an $L$-layer HNF is represented by the number of nodes, counted as $n^{(1)} + \sum_{l=2}^{L} 2n^{(l)}$. Here, $L=3$ for all three datasets. The number of nodes in the first layer ($n^{(1)}$) is set according to Table \ref{table:Database_for_classification_DCT} for each dataset.}
	\label{fig:accuracy_vs_size_ANN_ELM_DCT}
\end{figure*}

\subsection{HNF Using DCT}
In this subsection, we repeat the same experiments as in Subsection \ref{subsec:HNF_Random} by using DCT instead of the Gaussian weight matrix. The number of nodes in each layer of the network is chosen as in Subsection \ref{subsec:HNF_Random}. We apply zero-padding before DCT in the first layer to build the weight matrix $\mathbf{W}^{(1)} \in \mathbb{R}^{n^{(1)} \times P}$ with appropriate dimension for each dataset. Note that $n^{(l)}=m^{(l)}$ for $l \ge 2$ in all the experiments, and therefore, there is no need to apply zero-padding in the next layers. The step size in the ADMM algorithm is set to $10^{2}$ in all the simulations in this subsection. 
    
    We implement the same two scenarios. First, we implement the proposed HNF by using DCT and show performance improvement throughout the layers. Second, we build the proposed HNF by using the ELM feature vector in the first layer and DCT matrices in the next layers. Note that the regularization parameters $\epsilon_l$ for $l \ge 2$ are chosen according to \eqref{eq:epsilon_DCT}. The choice of $\epsilon_1$ is such that it guarantees \eqref{eq:relation_cost_LS} and \eqref{eq:relation_cost_ELM} according to each scenario. 
    
\begin{table}[t!]
	\centering
	\caption{Test classification accuracy of the proposed HNF for different datasets using DCT}
	\label{table:Database_for_classification_DCT}
	\setlength{\tabcolsep}{5.2pt}
	\renewcommand{\arraystretch}{1.5}
	\begin{tabular}{|c|c|c|c||c|c|c|c|}
		\hline
		\multirow{2}{*}{Dataset} & \multicolumn{3}{|c||}{Proposed HNF} & ELM & \multicolumn{3}{|c|}{Proposed HNF} \\ \cline{2-8}
		& Accuracy & $n^{(1)}$ & $L$ & Accuracy & Accuracy & $n^{(1)}$ & $L$\\ \hline \hline
		Letter & 93.2 & $250$ & 5 & 88.3 & 94.7 & $1000$ & 3 \\ 
		\hline
		Shuttle & 99.8 & $250$ & 5 & 99.0 & 99.3 & $1000$ & 3 \\ 
		\hline
		MNIST & 97.7 & $1000$ & 5 & 96.9 & 97.8 & $4000$ & 3 \\ 
		\hline
	\end{tabular}
\end{table} 

    The performance results of the proposed HNF by using DCT matrices are reported in Table \ref{table:Database_for_classification_DCT}. Note that the number of neurons $n^{(1)}$ in the first layer and the number of layers are the same as Table \ref{table:Database_for_classification}. The performance improvement in each layer of HNF are given in Figure \ref{fig:accuracy_vs_size_ANN_LS_DCT} and Figure \ref{fig:accuracy_vs_size_ANN_ELM_DCT}. It can be seen that by using DCT in the proposed HNF, it is also possible to improve the performance with a few layers. 
    
    Finally, we compare the performance of the DCT-based HNF and that of the random matrix-based HNF as shown in Table \ref{table:Database_for_classification} and Table \ref{table:Database_for_classification_DCT}. We can see that using DCT as the weight matrix is as powerful as using random weights in these three datasets.

\begin{table}[t!]
        \centering
        \caption{Training time and test classification accuracy of the proposed HNF versus backpropagation}
        \label{table:time_complexity}
        \setlength{\tabcolsep}{3.6pt}
        \renewcommand{\arraystretch}{1.5}
        \begin{tabular}{|c|c|c||c|c|}
            \hline
            \multirow{2}{*}{Dataset} & \multicolumn{2}{|c||}{Proposed HNF} & \multicolumn{2}{|c|}{Backpropagation} \\ \cline{2-5}
            & Accuracy & Training time & Accuracy & Training time \\ \hline \hline
            Letter & $93.42$ & $47$ s & $95.03$ & $4566$ s \\ 
            \hline
            Shuttle & $99.21$ & $24$ s & $99.21$ & $15283$ s \\ 
            \hline
            MNIST & $97.14$ & $108$ s & $98.30$ & $20433$ s\\
            \hline
        \end{tabular}
    \end{table}

    \subsection{Computational Complexity}
    \label{subsec:HNF_backprop}
    Finally, we compare test classification accuracy and computational complexity of HNF with the backpropagation over the same learned HNF. We report training time of each method in seconds. We run our experiments on a server with multi-processors and 256 GB RAM. The optimization method used for backpropagation is ADAM  \cite{ADAM_2015} from TensorFlow. The learning rate of ADAM is chosen via cross-validation, and the number of epochs is fixed to 1000 in all the experiments.

    We construct HNF by using random weights and use the same number of layers and nodes as in Table \ref{table:Database_for_classification}. Note that we do not use ELM feature vector in the first layer for this experiments, although it is possible to use it in order to improve the performance. The results are shown in Table \ref{table:time_complexity}. As expected, backpropagation can improve the performance, except for Shuttle, at the cost of a significantly higher computational complexity. HNF, on the other hand, does not require cross-validation and only performs training at the last layer of the network, leading to a much faster training. Note that training time reported for backpropapation in Table \ref{table:time_complexity} does not include cross-validation for the learning rate so that we can have a fair comparison with HNF.

    At this point, we also provide the reported classification performance of scattering network on MNIST dataset for the sake of completeness. Scattering network with principal component analysis (PCA)  \cite{ScatteringNet_2013} over a modulus of windowed Fouriers transforms yields $98.2\%$ test classification accuracy for a spatial support equal to $8$. This results shows that scaterring network can outperform HNF at the cost of a higher complexity of using several scattering integrals in each layer. Note that  HNF only uses a random choice of a Gaussian distribution as the weight matrix in each layer. Besides, scattering network requires accurate choice of several hyperparameters such as the spatial support, number of filter banks, type of the transforms, and etc., which can be crucial for the performance. For example, in our experiments, a scattering network with PCA over a modulus of 2-D Morlet wavelets provides $94\%$ accuracy, at best, for a spatial support of $28$. The training on the our server lasted $1158$ seconds to yield such an accuracy, which highlights the learning speed of HNF in Table \ref{table:time_complexity}. The same network with a spatial support of $14$ gives a performance of $56.03\%$, showing the importance of a precise cross-validation.
    
\section{Conclusion}
\label{sec:Conclusion}
We show that by using a combination of orthonormal matrices and ReLU activation functions, it is possible to guarantee a monotonically decreasing training cost as the number of layers increases. The proposed method can be used by employing any other loss function, such as cross-entropy loss, as long as a linear projection is used after the ReLU activation function. Note that the same principle applies if instead of random matrices, we use any other real orthonormal matrices. Discrete cosine transform (DCT), Haar transform, and Walsh-Hadamard transform are examples of this kind. The proposed HNF is a universal architecture in the sense that it can be applied to improve the performance of any other learning method which employs linear projection to predict the target. The norm-preserving and invertibility of the architecture make the proposed HNF suitable for other applications such as auto-encoder design.

%

\appendix
\section{Appendix}

\subsection{Proof of Property~\ref{property:property2}}
\label{Append:Property_ReLU}
\begin{proof}
	For scalars $x_1$ and $x_2$, we have $y_1 = g(x_1)$ and  $y_2 = g(x_2)$. We have following relation
	\begin{eqnarray}
		(y_1 - y_2)^2 = \left \{
		\begin{array}{lr}
			(x_1 - x_2)^2 & \mathrm{if} \,\, x_1 > 0, x_2 >0  \\
			x_1^2 & \mathrm{if} \,\, x_1 > 0, x_2 <0  \\
			x_2^2 & \mathrm{if} \,\, x_1 < 0, x_2 > 0  \\
			0 & \mathrm{if} \,\, x_1 < 0, x_2 < 0.
		\end{array}
		\right.
	\end{eqnarray}
	Therefore, we find that ReLU function holds
	$
	0 \leq (y_1 - y_2)^2 \leq (x_1 - x_2)^2.
	$
	Considering the vectors $\mathbf{y}_1 = \mathbf{g}(\mathbf{z}_1) = \mathbf{g}(\mathbf{W}\mathbf{q}_1)$ and $\mathbf{y}_2 = \mathbf{g}(\mathbf{z}_2) = \mathbf{g}(\mathbf{W}\mathbf{q}_2)$, we have 
	\begin{align}
		0 & \leq \| \mathbf{y}_1 - \mathbf{y}_2 \|^2 = \sum_i (y_1(i) - y_2(i))^2 \nonumber\\ & \leq  \sum_i (z_1(i) - z_2(i))^2 = \| \mathbf{z}_1 - \mathbf{z}_2 \|^2,
	\end{align}
	where $y_1(i)$ is the the $i$-th scalar element of $\mathbf{y}_1$ and $z_1(i)$ is the the $i$-th scalar element of $\mathbf{z}_1$.
\end{proof}

\subsection{Proof of Proposition~\ref{proposition:proposition_with_Vn}}
\label{Append:Prop_1}
\begin{proof}
	We have $\mathbf{z} = \mathbf{W} \mathbf{q} \in \mathbb{R}^{n}$ and $\bar{\mathbf{y}}=\mathbf{g}(\mathbf{V}_n\mathbf{z}) \in \mathbb{R}^{2n}$ where 
	$
	\mathbf{V}_n = \left[ 
	\begin{array}{c}
	\mathbf{I}_n \\
	- \mathbf{I}_n
	\end{array}
	\right].
	$
	For two vectors $\mathbf{q}_1$ and $\mathbf{q}_2$, we have corresponding vectors $\mathbf{z}_1 = \mathbf{W} \mathbf{q}_1$ and $\mathbf{z}_2 = \mathbf{W} \mathbf{q}_2$, and output vectors $\bar{\mathbf{y}}_1 = \mathbf{g}(\mathbf{V}_n\mathbf{z}_1)$ and $\bar{\mathbf{y}}_2 = \mathbf{g}(\mathbf{V}_n\mathbf{z}_2)$. Note that $\bar{\mathbf{y}}_1 = 
	\left[
	\begin{array}{c}
	\mathbf{z}^{+}_1 \\ -\mathbf{z}^{-}_1
	\end{array}
	\right]$ and therefore, $\|\bar{\mathbf{y}}_1\|^2 = \| \mathbf{z}^{+}_1\|^2 + \| \mathbf{z}^{-}_1\|^2 = \| \mathbf{z}_1\|^2$, by definition. Similarly, $\|\bar{\mathbf{y}}_2\|^2 = \| \mathbf{z}_2\|^2$. Let us define a set 
	\begin{eqnarray}
		\mathcal{M}(\mathbf{z}_1,\mathbf{z_2}) \!=\! \{ i | s(z_1(i)) = s(z_2(i)) \neq 0 \} \subseteq \{1,2, \hdots , n\}. \nonumber
	\end{eqnarray}
	Then, we have
	\begin{align}
		\| \mathbf{z}_1 - \mathbf{z}_2 \|^2 & =  \sum_{i=1} ( z_1(i) - z_2(i) )^2 \nonumber \\
		& = \sum_{i} (s(z_1(i)) |z_1(i)| - s(z_2(i)) |z_2(i)|)^2 \nonumber \\
		& = \sum_{i \in \mathcal{M}(\mathbf{z}_1,\mathbf{z}_2)} ( |z_1(i)| - |z_2(i)| )^2 \nonumber \\ & + \sum_{i \in \mathcal{M}^c(\mathbf{z}_1,\mathbf{z}_2)} ( |z_1(i)| + |z_2(i)| )^2. 
		\label{eq:z_distance}
	\end{align}
	We write $\mathbf{z}_1 = \mathbf{z}_1^{+} + \mathbf{z}_1^{-} = \mathbf{s}(\mathbf{z}_1^{+}) |\mathbf{z}_1^{+}| + \mathbf{s}(\mathbf{z}_1^{-}) |\mathbf{z}_1^{-}|$. Then, after ReLU operation, we have
	$
	\bar{\mathbf{y}}_1 = \mathbf{g}(\mathbf{V}_n \mathbf{z}_1) = \left[
	\begin{array}{c}
	|\mathbf{z}_1^{+}|  \\
	|\mathbf{z}_1^{-}| 
	\end{array}
	\right]$ and $
	\bar{\mathbf{y}}_2 = \mathbf{g}(\mathbf{V}_n \mathbf{z}_2) = \left[
	\begin{array}{c}
	|\mathbf{z}_2^{+}|  \\
	|\mathbf{z}_2^{-}| 
	\end{array}
	\right].
	$
	\begin{table*}[t!]
		\begin{subequations}
			\begin{align}
				\| \bar{\mathbf{y}}_1 - \bar{\mathbf{y}}_2 \|^2 & = \| |\mathbf{z}_1^{+}| - |\mathbf{z}_2^{+}| \|^2 + \| |\mathbf{z}_1^{-}| - |\mathbf{z}_2^{-}| ) \|^2  \nonumber\\
				& = \displaystyle \sum_{i \in \mathcal{M}(|\mathbf{z}_1^{+}| ,|\mathbf{z}_2^{+}| )} ( |z_1^{+}(i)| - |z_2^{+}(i)| )^2 + \displaystyle \sum_{i \in \mathcal{M}^c(|\mathbf{z}_1^{+}| ,|\mathbf{z}_2^{+}| )} ( |z_1^{+}(i)| + |z_2^{+}(i)| )^2 \nonumber\\
				& + \displaystyle \sum_{i \in \mathcal{M}(|\mathbf{z}_1^{-}| ,|\mathbf{z}_2^{-}| )} ( |z_1^{-}(i)| - |z_2^{-}(i)| )^2 + \displaystyle \sum_{i \in \mathcal{M}^c(|\mathbf{z}_1^{-}| ,|\mathbf{z}_2^{-}| )} ( |z_1^{-}(i)| + |z_2^{-}(i)| )^2 \nonumber\\
				& = \displaystyle \sum_{i \in \mathcal{M}(|\mathbf{z}_1^{+}| ,|\mathbf{z}_2^{+}| )} ( |z_1^{+}(i)| - |z_2^{+}(i)| )^2 + \displaystyle \sum_{i \in \mathcal{M}(|\mathbf{z}_1^{-}| ,|\mathbf{z}_2^{-}| )} ( |z_1^{-}(i)| - |z_2^{-}(i)| )^2 \nonumber\\
				& + \displaystyle \sum_{i \in \mathcal{M}^c(|\mathbf{z}_1^{+}| ,|\mathbf{z}_2^{+}| )} ( |z_1^{+}(i)| + |z_2^{+}(i)| )^2 + \displaystyle \sum_{i \in \mathcal{M}^c(|\mathbf{z}_1^{-}| ,|\mathbf{z}_2^{-}| )} ( |z_1^{-}(i)| + |z_2^{-}(i)| )^2 \nonumber\\
				& = \displaystyle \sum_{i \in \mathcal{M}(\mathbf{z}_1,\mathbf{z}_2)} ( |z_1(i)| - |z_2(i)| )^2 + \displaystyle \sum_{i \in \mathcal{M}^c(\mathbf{z}_1,\mathbf{z}_2)}  |z_1(i)|^2 + |z_2(i)|^2. \nonumber\\
				& = \|\mathbf{z}_1 \|^2 + \| \mathbf{z}_2 \|^2 - 2 \displaystyle \sum_{i \in \mathcal{M}(\mathbf{z}_1,\mathbf{z}_2)} \mathbf{z}_1(i) \mathbf{z}_2(i) \nonumber\\
				& = \|\mathbf{z}_1 \|^2 + \| \mathbf{z}_2 \|^2 - 2 \displaystyle \sum_{i=1}^n \mathbf{z}_1(i) \mathbf{z}_2(i) + 2 \displaystyle \sum_{i \in \mathcal{M}^c(\mathbf{z}_1,\mathbf{z}_2)} \mathbf{z}_1(i) \mathbf{z}_2(i) \nonumber\\
				& = \|\mathbf{z}_1 - \mathbf{z}_2\|^2 + 2 \displaystyle \sum_{i \in \mathcal{M}^c(\mathbf{z}_1,\mathbf{z}_2)} \mathbf{z}_1(i) \mathbf{z}_2(i) \label{eq:y_bar_distance_a}\\ 
				& = \frac{1}{2}\|\mathbf{z}_1 - \mathbf{z}_2\|^2 + \frac{1}{2}(\|\mathbf{z}_1 \|^2 + \| \mathbf{z}_2 \|^2 - 2 \displaystyle \sum_{i \in \mathcal{M}(\mathbf{z}_1,\mathbf{z}_2)} \mathbf{z}_1(i) \mathbf{z}_2(i) + 2 \displaystyle \sum_{i \in \mathcal{M}^c(\mathbf{z}_1,\mathbf{z}_2)} \mathbf{z}_1(i) \mathbf{z}_2(i)) \nonumber\\
				& = \frac{1}{2}\|\mathbf{z}_1 - \mathbf{z}_2\|^2 + \frac{1}{2}(\|\mathbf{z}_1 \|^2 + \| \mathbf{z}_2 \|^2 - 2 \displaystyle \sum_{i \in \mathcal{M}(\mathbf{z}_1,\mathbf{z}_2)} |\mathbf{z}_1(i)| |\mathbf{z}_2(i)| - 2 \displaystyle \sum_{i \in \mathcal{M}^c(\mathbf{z}_1,\mathbf{z}_2)} |\mathbf{z}_1(i)| |\mathbf{z}_2(i)|) \nonumber\\
				& = \frac{1}{2}\|\mathbf{z}_1 - \mathbf{z}_2\|^2 + \frac{1}{2}(\|\mathbf{z}_1 \|^2 + \| \mathbf{z}_2 \|^2 - 2 \displaystyle \sum_{i = 1}^n |\mathbf{z}_1(i)| |\mathbf{z}_2(i)|) \nonumber\\
				& = \frac{1}{2}\|\mathbf{z}_1 - \mathbf{z}_2\|^2 + \frac{1}{2}\| |\mathbf{z}_1| - |\mathbf{z}_2| \|^2 \label{eq:y_bar_distance_b}
			\end{align}
		\end{subequations}
	\end{table*}
	With similar calculations as in \eqref{eq:z_distance}, we can derive the relationships in equation \eqref{eq:y_bar_distance_a} and \eqref{eq:y_bar_distance_b}. Since the summation $\sum_{i \in \mathcal{M}^c(\mathbf{z}_1,\mathbf{z}_2)} \mathbf{z}_1(i) \mathbf{z}_2(i)$ is always non-positive, from \eqref{eq:y_bar_distance_a}, we can see that
	\begin{eqnarray}
		\| \bar{\mathbf{y}}_1 - \bar{\mathbf{y}}_2 \|^2 \leq \| \mathbf{z}_1 - \mathbf{z}_2 \|^2,
	\end{eqnarray}
	where equality holds when $\mathcal{M}^c = \emptyset$, that means when sign patterns of $\mathbf{z}_1$ and $\mathbf{z}_2$ match exactly. From \eqref{eq:y_bar_distance_b}, it can also be seen that 
	\begin{eqnarray}
		\frac{1}{2}\| \mathbf{z}_1 - \mathbf{z}_2 \|^2 \le \| \bar{\mathbf{y}}_1 - \bar{\mathbf{y}}_2 \|^2,
	\end{eqnarray}
	where equality holds when $|\mathbf{z}_1| = |\mathbf{z}_2|$.
\end{proof}

\subsection{Proof of Remark \ref{remark:system_perturbation}}
\label{Append:Remark_3}
\begin{proof}
	Consider $\mathbf{z} = \mathbf{W} \mathbf{q}$ and $\mathbf{z}_{\Delta} \triangleq [\mathbf{W} + \Delta \mathbf{W}] \mathbf{q} = \mathbf{W} \mathbf{q} + [\Delta \mathbf{W}] \mathbf{q} = \mathbf{z} + \Delta \mathbf{z}$. Based on Proposition  \ref{proposition:proposition_with_Vn}, we can simply write
	\begin{align}
		\| \bar{\mathbf{y}} - \bar{\mathbf{y}}_{\Delta} \|^2 & \leq \| \Delta \mathbf{z} \|^2 = \| [\Delta \mathbf{W}] \, \mathbf{q} \|^2 \nonumber\\
		& \leq \| \Delta \mathbf{W} \|_F^2  \| \mathbf{q} \|^2,
	\end{align}
	where we have used equation \eqref{eq:LBAndUB_1}.
\end{proof}



%
%

\section*{Availability of data and materials}
All datasets used in the experiments are publicly available online. Please contact the corresponding author for simulation results.

\section*{Acknowledgements}
We acknowledge the support of our KTH colleagues Amirreza Zamani and Hamid Ghourchian for proofreading and critical remarks.

\ifCLASSOPTIONcaptionsoff
  \newpage
\fi



%

\bibliographystyle{IEEEbib}
\bibliography{HNF}
%

%


%



\end{document}